\theoremstyle{plain}
\def\bks{\backslash}
\newcommand{\iba}{_{i, \bks a}}
\newcommand{\ba}{_{\bks a}}
\def\0b{\mathbf{0}}
\newcommand{\dotpm}[2]{\langle\!\langle{#1},{#2}\rangle\!\rangle}
\newcommand{\supb}[1]{^{\hat \Bcal^{#1}}}
\newcommand{\sk}{_{ S^k}}
\title{Estimating Networks With Jumps}
\author{Mladen Kolar and Eric P. Xing\\
School of Computer Science\\
Carnegie Mellon University}
\date{}
\begin{document}

\maketitle

\begin{abstract}
  We study the problem of estimating a temporally varying coefficient
  and varying structure (VCVS) graphical model underlying
  nonstationary time series data, such as social states of interacting
  individuals or microarray expression profiles of gene networks, as
  opposed to {\it i.i.d.} data from an invariant model widely
  considered in current literature of structural estimation.  In
  particular, we consider the scenario in which the model evolves in a
  piece-wise constant fashion. We propose a procedure that minimizes
  the so-called TESLA loss (i.e., temporally smoothed L1 regularized
  regression), which allows jointly estimating the partition
  boundaries of the VCVS model and the coefficient of the sparse
  precision matrix on each block of the partition. A highly scalable
  proximal gradient method is proposed to solve the resultant convex
  optimization problem; and the conditions for sparsistent estimation
  and the convergence rate of both the partition boundaries and the
  network structure are established for the first time for such
  estimators.
\end{abstract}

\section{Introduction}

Networks are a fundamental form of representation of relational
information underlying large, noisy data from various domains. For
example, in a biological study, nodes of a network can represent genes
in one organism and edges can represent associations or regulatory
dependencies among genes. In a social analysis, nodes of a network can
represent actors and edges can represent interactions or friendships
between actors. Exploring the statistical properties and hidden
characteristics of network entities, and the stochastic processes
behind temporal evolution of network topologies is essential for
computational knowledge discovery and prediction based on network
data.

In many dynamical environments, such as a developing biological
system, it is often technically impossible to experimentally determine
the network topologies specific to every time point in a discrete time
series. Resorting to computational inference methods, such as extant
structural learning algorithms, is also difficult because for every
model unique to a single time point, there exist as few as only a
single snapshot of the nodal states distributed accordingly to the
model in question. In this paper, we consider an estimation problem
under a particular dynamic context, where the model evolves piecewise
constantly, i.e., staying structurally invariant during unknown
segments of time, and then jump to a different structure.

A popular technique for deriving the network structure from {\it iid}
sample is to estimate a sparse precision matrix. The importance of
estimating precision matrices with zeros was recognized by
\citep{dempster72covariance} who coined the term {\it covariance
  selection}.  The elements of the precision matrix represent the
associations or conditional covariances between corresponding
variables. Once a sparse precision matrix is estimated, a network can
be drawn by connecting variables whose corresponding elements of the
precision matrix are non-zero. Recent studies have shown that
covariance selection methods based on the penalized likelihood
maximization can lead to a consistent estimate of the network
structure underlying a Gaussian Markov Random
Fields~\citep{fan09network,ravikumar08high}.  Moreover, a particular
procedure for covariance selection known as neighborhood selection,
which is built on $\ell_1$ norm regularized regression, can produce a
consistent estimate of the network structure when the sample is
assumed to follow a general Markov Random Field distribution whose
structure corresponds to the network in
question~\citep{ravikumar09high,meinshausen06high,peng09partial}.
Specifically, a Markov Random Field (MRF) is a probabilistic graphical
model defined on a graph $G = (V,E)$, where $V = \{1, \ldots, p\}$ is
a vertex set corresponding to the set of random variables to be
modeled (in this paper we call them {\it nodes} and {\it variables}
interchangeably), and $E \subseteq V \times V$ is the edge set
capturing conditional indecencies among these nodes.  Let $\Xb = (X_1,
\ldots, X_p)'$ denote a $p$-dimensional random vector, whose elements
are indexed by the nodes of the graph $G$. Under the MRF, a pair
$(a,b)$ is not an element of the edge set $E$ if and only if the
variable $X_a$ is conditionally independent of $X_b$ given all the
rest of variables $X_{V \backslash \{a,b\}}$, $X_a \perp X_b | X_{V
  \backslash \{a,b\}}$. A distribution over $\Xb$ can be defined by
taking the following log linear form that makes explicit use of the
(presence and absence of edges in the) edge set: $p(\Xb) \propto
\exp\{\sum_{(a,b) \in V} \theta_{ab} X_a X_b \}$.  When the elements
of the random vector $\Xb$ are discrete, e.g., $\Xb \in \{0, 1\}^p$,
the model is referred to as a discrete MRF, sometimes known as an
Ising model in statistics physics community; whereas when $\Xb$ is a
continuous vector, the model is referred to as a Gaussian graphical
model (GGM) because one can easily show that the $p(\Xb)$ above is
actually a multivariate Gaussian.  The MRF have been widely used for
modeling data with graphical relational structures over a fixed set of
entities \citep{wainwright08graphical, getoor07introduction}.  The
vertices can describe entities such as genes in a biological
regulatory network, stocks in the market, or people in society; while
the edges can describe relationships between vertices, for example,
interaction, correlation or influence.

The statistical problem we concern in this paper is to estimate the
structure of the Gaussian graphical model from observed samples of
nodal states in a dynamic world. Traditional methods handle this
problem with the assumption that the samples are {\it iid}. Let $\Dcal
= \{ \xb_1, \ldots, \xb_n\}$ be an {\it independent and identically
  distributed} sample according to a $p$-dimensional multivariate
normal distribution $\Ncal_p(\0b, \Sigmab)$, where $\Sigmab$ is the
covariance matrix. Let $\Omegab := \Sigmab^{-1}$ denote the precision
matrix, with elements $(\omega_{ab})$, $1 \leq a,b \leq p$. Then one
can obtain an estimator of the $\Omegab$ from $\Dcal$ via optimizing a
proper statistical loss function, such as likelihood or penalized
likelihood. As mentioned earlier, the precision matrix $\Omegab$
encodes the conditional independence structure of the distribution and
the pattern of the zero elements in the precision matrix define the
structure of the associated graph $G$.  There has been a dramatic
growth of interest in recent literature in the problem of covariance
selection, which deals with the graph estimation problem
above. Existing works range from algorithmic development focusing on
efficient estimation procedures, to theoretical analysis focusing on
statistical guarantees of different estimators. We do not intend to
give an extensive overview of the literature here, but interested
readers can follow the pointers bellow. In the classical
literature~\citep[e.g.][]{lauritzen96graphical}, procedures are
developed for small dimensional graphs and commonly involve hypothesis
testing with greedy selection of edges. More recent literature
estimates the sparse precision matrix by optimizing penalized
likelihood \citep{yuan07model,fan09network, banerjee08model,
  rothman08spice, friedman08sparse, ravikumar08high,
  guo10jointBiometrika, Zhou08time} or through neighborhood selection
\citep{meinshausen06high, peng09partial,guo10joint,wang09learning},
where the structure of the graph is estimated by estimating the
neighborhood of each node. Both of these approaches are suitable for
high-dimensional problems, even when $p \gg n$, and can be efficiently
implemented using scalable convex program solvers.

Most of the above mentioned work assumes that a single invariant
network model is sufficient to describe the dependencies in the
observed data. However, when the observed data are not {\it iid}, such
an assumption is not justifiable. For example, when data consist of
microarray measurements of the gene expression levels collected
throughout the cell cycle or development of an organism, different
genes can be active during different stages. This suggests that
different distributions and hence different networks should be used to
describe the dependencies between measured variables at different time
intervals. In this paper, we are going to tackle the problem of
estimating the structure of the GGM when the structure is allowed to
change over time. By assuming that the parameters of the precision
matrix change with time, we obtain extra flexibility to model a larger
class of distributions while still retaining the interpretability of
the static GGM. In particular, as the coefficients of the precision
matrix change over time, we also allow the structure of the underlying
graph to change as well. This semi-parametric generalization of the
parametric model is referred to as a varying coefficient varying
structure (VCVS) model.

Now, let $\{\xb_i\}_{i \in [n]} \in \RR^p$ be a sequence of $n$
independent observations (we use $[n]$ to denote the set $\{1, \ldots,
n\}$) from some $p$-dimensional multivariate normal distributions, not
necessarily the same for every observation. Let $\{ \Bcal^j \}_{j \in
  [B]}$ be a disjoint partitioning of the set $[n]$ where each block
of the partition consists of consecutive elements, that is, $\Bcal^j
\cap \Bcal^{j'} = \emptyset$ for $j \neq j'$ and $\bigcup_j \Bcal^j =
[n]$ and $\Bcal^j = [T_{j-1}:T_{j}] := \{T_{j-1}, T_{j-1} + 1, \ldots,
T_{j} - 1\}$. Let $\Tcal := \{ T_0 = 1 < T_1 < \ldots < T_B = n+1\}$
denote the set of partition boundaries. We consider the following
model
\begin{equation}
  \label{eq:model}
  \xb_i \sim \Ncal_p(\mathbf{0}, \Sigmab^j), \qquad i \in \Bcal^j,
\end{equation}
so that observations indexed by elements in $\Bcal^j$ are
$p$-dimensional realizations of a multivariate normal distribution
with zero mean and the covariance matrix $\Sigmab^j =
(\sigma_{ab}^j)_{a,b \in [p]}$. Let $\Omegab^j := (\Sigmab^j)^{-1}$
denote the precision matrix with elements
$(\omega_{ab}^j)_{a,b\in[p]}$.  With the number of partitions, $B$,
and the boundaries of partitions, $\Tcal$, unknown, we study the
problem of estimating both the partition set $\{ \Bcal^j \}$ and the
non-zero elements of the precision matrices $\{\Omegab^j\}_{j \in
  [B]}$ from the sample $\{\xb_i\}_{i \in [n]}$. Note that in this
work we study a particular case of the VCVS model, where the
coefficients are piece-wise constant functions of time. A scenario
where the coefficients are smoothly varying functions of time has been
considered in \cite{Zhou08time} for the GGM and in
\cite{kolar08estimating} and \cite{kolar09sparsistent} for an Ising
model.

If the partitions $\{\Bcal^j\}_j$ were known, the problem would be
trivially reduced to the setting analyzed in the previous
work. Dealing with the unknown partitions, together with the structure
estimation of the model, calls for new methods. We propose and analyze
a method based on {\it time-coupled neighborhood selection}, where the
model estimates are forced to stay similar across time using a
fusion-type total variation penalty and the sparsity of each
neighborhood is obtained through the $\ell_1$ penalty. Details of the
approach are given in $\S2$.

The model in Eq. \eqref{eq:model} is related to the varying-coefficient
models \citep[e.g.][]{hastie93varying} with the coefficients being
piece-wise constant functions. Varying coefficient regression models
with piece-wise constant coefficients are also known as segmented
multivariate regression models \citep{liu97segmented} or linear models
with structural changes \citep{Bai98estimating}. The structural changes
are commonly determined through hypothesis testing and a separate
linear model is fit to each of the estimated segments. In our work, we
use the penalized model selection approach to jointly estimate the
partition boundaries and the model parameters.

Little work has been done so far towards modeling dynamic networks and
estimating changing precision matrices. \cite{Zhou08time} develops a
nonparametric method for estimation of time-varying GGM, where $\xb^t
\sim \Ncal_p(\0b, \Sigmab(t))$ and $\Sigmab(t)$ is smoothly changing
over time. The procedure is based on the penalized likelihood approach
of \cite{yuan07model} with the empirical covariance matrix obtained
using a kernel smoother. Our work is very different from the one of
\cite{Zhou08time}, since under our assumptions the network changes
abruptly rather than smoothly. Furthermore, as we outline in
$\S$\ref{sec:model}, our estimation procedure is not based on the
penalized likelihood approach. Estimation of time-varying Ising models
has been discussed in \cite{amr09tesla} and
\cite{kolar08estimating}. \cite{yin08nonparametric} and
\cite{kolar10nonparametric} studied nonparametric ways to estimate the
conditional covariance matrix. The work of \cite{amr09tesla} is most
similar to our setting, where they also use a fused-type penalty
combined with an $\ell_1$ penalty to estimate the structure of the
verying Ising model. Here, in addition to focusing on GGMs, there is
an additional subtle, but important, difference to
\cite{amr09tesla}. In this work, we use a modification of the fusion
penalty (formally described in $\S$\ref{sec:model}) which allows us to
characterize the model selection consistency of our estimates and the
convergence properties of the estimated partition boundaries, which is
not available in the earlier work.

The remaining of the paper is organized as follows. In
$\S$\ref{sec:model}, we describe our estimation procedure and provide
an efficient first-order optimization procedure capable of estimating
large graphs. The optimization procedure is based on the smoothing
procedure of \cite{nesterov05smooth} and converges in
$\Ocal(1/\epsilon)$ iterations, where $\epsilon$ is the desired
accuracy. Our main theoretical results are presented in
$\S$\ref{sec:theoretical-results}. In particular, we show that the
partition boundaries are estimated consistently. Furthermore, the
graph structure is consistently estimated on every block of the
partition that contains enough samples. Numerical studies showing the
finite sample performance of our procedure are given in
$\S$\ref{sec:numerical-studies}. The proofs of the main results are
relegated to $\S$\ref{sec:proofs}, with some technical details
presented in Appendix.

\subsection*{Notation Schemes:}
For clarity, we end the introduction with a summary of the notations
used in the paper. We use $[n]$ to denote the set $\{1, \ldots, n\}$
and $[l:r]$ to denote the set $\{l, l+1, \ldots, r-1\}$. We use
$\Bcal^j$ to denote $j$-th block of the partition $\Tcal$. With some
abuse of notation, we also use $\Bcal^j$ to denote the set
$[T_{j-1}:T_j]$. The number of samples in the block $\Bcal^j$ is
denoted as $|\Bcal^j|$. For a set $S \subset V$, we use the notation
$X_S$ to denote the set $\{ X_a : a \in S\}$ of random variables. We
use $\Xb$ to denote the $n \times p$ matrix whose rows consist of
observations. The vector $\Xb_a = (x_{1,a}, \ldots, x_{n,a})'$ denotes
a column of matrix $\Xb$ and, similarly, $\Xb_S = (\Xb_b\ :\ b \in S)$
denotes the $n \times |S|$ sub-matrix of $\Xb$ whose columns are
indexed by the set $S$ and $\Xb^{\Bcal^j}$ denotes the sub-matrix
$|\Bcal^j| \times p$ whose rows are indexed by the set
$\Bcal^j$. For simplicity of notation, we will use $\bks a$ to
denote the index set $[p]\bks \{a\}$, $\Xb\ba = (\Xb_b\ :\ b \in
[p]\bks\{a\})$. For a vector $\ab \in \RR^p$, we let $S(\ab)$ denote
the set of non-zero components of $\ab$. Throughout the paper, we use
$c_1, c_2, \ldots$ to denote positive constants whose value may change
from line to line. For a vector $\ab \in \RR^n$, define $\norm{\ab}_1
= \sum_{i \in [n]} |a_i|$, $\norm{\ab}_2 = \sqrt{\sum_{i \in [n]}
  a_i^2}$ and $\norm{\ab}_{\infty} = \max_i |a_i|$. For a symmetric
matrix $\Ab$, $\Lambda_{\min}(\Ab)$ denotes the smallest and
$\Lambda_{\max}(\Ab)$ the largest eigenvalue. For a matrix $\Ab$ (not
necessarily symmetric), we use $\opnorm{\Ab}{\infty} = \max_{i} \sum_j
|A_{ij}|$. For two vectors $\ab, \bb \in \RR^n$, the dot product is
denoted $\dotp{\ab}{\bb} = \sum_{i \in [n]} a_ib_i$. For two matrices
$\Ab, \Bb \in \RR^{n \times m}$, the dot product is denoted as
$\dotpm{\Ab}{\Bb} = \tr( \Ab'\Bb )$. Given two sequences $\{a_n\}$ and
$\{b_n\}$, the notation $a_n = \Ocal(b_n)$ means that there exists a
constant $c_1$ such that $a_n \leq c_1 b_n$; the notation $a_n =
\Omega(b_n)$ means that there exists a constant $c_2$ such that $a_n
\geq c_2 b_n$ and the notation $a_n \asymp b_n$ means that $a_n =
\Ocal(b_n)$ and $b_n = \Ocal(a_n)$. Similarly, we will use the
notation $a_n = o_p(b_n)$ to denote that $b_n^{-1}a_n$ converges to
$0$ in probability.

\begin{table*}
\caption{Summary of symbols used throughout the paper}
\begin{tabular}{c|c|c}
Symbol & Meaning & Example \\
\hline
$[n]$ & used to denote the set $\{1, \ldots, n\}$ 
    & \\
$[t_1:t_2]$ & used to denote the set $\{t_1, t_{1}+1, \ldots, t_{2}-1\}$ 
    & \\
$i$ & used for indexing related to samples 
    & $\xb^i$ or $\beta_{\cdot, i}^a$ \\
$j, k$ & used for indexing related to block 
    & $\thetab^{a,j}$ or $S_a^k$ \\
$a, b$ & used for indexing nodes in a graph
    & $a, b \in V$ \\
$G$ & the graph consisting of vertices and edges  
    & $G = (V,E)$ \\
$V$ & the set of nodes in a graph 
    & $V = [p]$ \\
$E_i$ & the set of edges at time $i$ 
    & \\
$X_a$ & the component of a random vector $\Xb$ indexed by 
    the vertex $a$ & \\
$\betab_{\cdot, i}^a$ & the vector of regression coefficients
    for sample $i$ & \\
$\thetab^{a,j}$ & the vector of regression coefficients 
    for block $j$ & \\
$\Tcal$ & the set of partition boundaries 
    & \\
$\{\tau_j\}_j$ & the set of boundary fractions 
    & $T_j = \lfloor n\tau_j  \rfloor $ \\
$\Bcal^j$ & an index set for the samples in the partition $j$ 
    &  $\Bcal^j \subset [n]$ \\
$B$ & denotes the number of partitions 
    & \\
$S_a^j$ & the set of neighbors of node $a$ in block $j$ 
    & \\
%$S_a$ & the union of all neighbors of node $a$ 
%    & $S_a = \cup_{j \in [B]} S_a^j$ \\
$S(\thetab^{a,j})$ & the set of non-zero elements of $\thetab^{a,j}$ 
    & \\
$\bar S_a^j$ & the closure of $S_a^j$ 
    & $\bar S_a^j = S_a^j \cup \{a\}$ \\
$N_a^j$ & nodes not in the neighborhood of the node $a$ in block $j$
    & $N_a^j = [p] \bks \bar S_a^j$ \\
$\bks a$ & the set of all vertices excluding the vertex $a$ 
    & $\bks a = [p] \bks \{a\}$ \\
$|\cdot|$ & cardinality of a set or absolute value 
    & \\
$\Sigmab$ & the covariance matrix  & \\
$\sigma_{ab}$ & an element of the covariance matrix & \\
$\Omegab$ & the precision matrix & \\
$\omega_{ab}$ & an element of the precision matrix & \\
$\dotp{\cdot}{\cdot}$ & the dot product 
    &  $\dotp{\ab}{\bb} = \ab'\bb$ \\
$\dotpm{\cdot}{\cdot}$ & the dot product between matrices 
   & $\dotpm{\Ab}{\Bb} = \tr(\Ab'\Bb)$ \\
$\xi_{\min}$ & the minimum change between regression coefficient 
   & $\norm{\thetab^{a,j} - \thetab^{a,j-1}}_2 \geq \xi_{\min} $\\
$\theta_{\min}$ & the minimum size of a coefficient 
   & $|\theta_b^{a,j}| \geq \theta_{\min}$ \\
$\Delta_{\min}$ & the minimum size of a block 
   & $|\Bcal^j| \geq \Delta_{\min}$ \\
\hline
\end{tabular}
\end{table*}

\section{Graph estimation via Temporal-Difference
  Lasso} \label{sec:model}

In this section, we introduce our time-varying covariance selection
procedure, which is based on the time-coupled neighborhood selection
using the fused-type penalty. We call the proposed procedure
Temporal-Difference Lasso ({\it TD-Lasso}). We start by reviewing the
basic neighborhood selection procedure, which has previously been used
to estimate graphs in, for example, \cite{peng09partial},
\cite{meinshausen06high}, \cite{ravikumar09high} and \cite{guo10joint}.

We start by relating the elements of the precision matrix $\Omegab$ to
a regression problem. Let the set $S_a$ to denote the neighborhood of
the node $a$. Denote $\bar S_a$ the closure of $S_a$, $\bar S_a := S_a
\cup \{a\}$, and $N_a$ the set of nodes not in the neighborhood of the
node $a$, $N_a = [p] \bks \bar S_a$. It holds that $X_a \perp X_{N_a}
| X_{S_a}$. The neighborhood of the node $a$ can be easily seen from
the non-zero pattern of the elements in the precision matrix
$\Omegab$, $S_a = \{b \in [p]\bks \{a\}\ :\ \omega_{ab} \neq 0\}$. See
\cite{lauritzen96graphical} for more details.  It is a well known
result for Gaussian graphical models that the elements of
\begin{equation*}
  \thetab^a = \argmin_{\thetab \in \RR^{p-1}}\ \EE(X_a - \sum_{b \in
    \bks a} X_b \theta_b)^2
\end{equation*}
are given by $\theta^a_b = - \omega_{ab}/\omega_{aa}$. Therefore, the
neighborhood of a node $a$, $S_a$, is equal to the set of non-zero
coefficients of $\thetab^a$. Using the expression for $\thetab^a$, we
can write $X_a = \sum_{b \in S_a} X_b\theta^a_b + \epsilon$, where
$\epsilon$ is independent of $X\ba$.

The neighborhood selection procedure was motivated by the above
relationship between the regression coefficients and the elements of
the precision matrix. \cite{meinshausen06high} proposed to solve the
following optimization procedure
\begin{equation}
  \label{eq:neighborhood-selection}
  \hat \thetab^a = \argmin_{\thetab \in \RR^{p-1}}\ \frac{1}{n}\norm{\Xb_a -
    \Xb\ba\thetab}_2^2 + \lambda\norm{\thetab}_1
\end{equation}
and proved that for {\it iid} sample the non-zero coefficients of
$\hat \thetab^a$ consistently estimate the neighborhood of the node
$a$, under a suitably chosen penalty parameter $\lambda$.

In this paper, we build on the neighbourhood selection procedure to
estimate the changing graph structure in model~\eqref{eq:model}. We
use $S_a^j$ to denote the neighborhood of the node $a$ on the block
$\Bcal^j$ and $N_a^j$ to denote nodes not in the neighborhood of the
node $a$ on the $j$-th block, $N_a^j = V \bks S_a^j$. Consider the
following estimation procedure
\begin{equation}
  \label{eq:penalized-estimation}
  \hat \betab^a = \argmin_{\betab \in \RR^{p-1 \times n}}\ \Lcal(\betab) +
  \pen_{\lambda_1, \lambda_2}(\betab)
\end{equation}
where the loss is defined for $\betab = (\beta_{b,i})_{b \in [p-1], i
  \in [n]}$ as
\begin{equation}
  \label{eq:loss}
  \Lcal(\betab) := \sum_{i \in [n]} \bigg(
        x_{i,a} - \sum_{b \in \bks a} x_{i,b} \beta_{b,i}
      \bigg)^2
\end{equation}
and the penalty is defined as 
\begin{equation}
  \label{eq:penalty}
  \pen_{\lambda_1, \lambda_2}(\betab) := 
    2\lambda_1 \sum_{i=2}^{n} \norm{\betab_{\cdot, i} - \betab_{\cdot,i-1}}_2
  + 2\lambda_2\sum_{i=1}^n \sum_{b \in \bks a} |\beta_{b,i}|.
\end{equation}
The penalty term is constructed from two terms. The first term ensures
that the solution is going to be piecewise constant for some partition
of $[n]$ (possibly a trivial one). The first term can be seen as a
sparsity inducing term in the temporal domain, since it penalizes the
difference between the coefficients $\betab_{\cdot, i}$ and
$\betab_{\cdot, i+1}$ at successive time-points. The second term
results in estimates that have many zero coefficients within each
block of the partition. The estimated set of partition boundaries
\begin{equation*}
  \hat \Tcal = \{\hat T_0 = 1\} \cup \{ \hat T_j \in [2:n]\ : \ \hat
  \betab_{\cdot, \hat T_j}^a
  \neq \hat \betab_{\cdot,\hat T_j-1}^a\}  \cup \{\hat T_{\hat B} = n+1\}
\end{equation*}
contains indices of points at which a change is estimated, with $\hat
B$ being an estimate of the number of blocks $B$. The estimated number
of the block $\hat B$ is controlled through the user defined penalty
parameter $\lambda_1$, while the sparsity of the neighborhood is
controlled through the penalty parameter $\lambda_2$.

Based on the estimated set of partition boundaries $\hat \Tcal$, we
can define the neighborhood estimate of the node $a$ for each
estimated block. Let $\hat \thetab^{a,j} = \hat \betab^a_{\cdot, i}$,
$\forall i \in [\hat T_{j-1}:\hat T_j]$ be the estimated coefficient
vector for the block $\hat \Bcal^j = [\hat T_{j-1}:\hat T_j]$. Using
the estimated vector $\hat \thetab^{a,j}$, we define the neighborhood
estimate of the node $a$ for the block $\hat \Bcal^j$ as
\begin{equation*}
  \hat S_a^j := S(\hat \thetab^{a,j}) 
            := \{ b \in \bks a\ :\ \hat \theta^{a,j}_b \neq 0\}.
\end{equation*}
Solving \eqref{eq:penalized-estimation} for each node $a \in V$ gives
us a neighborhood estimate for each node. Combining the neighborhood
estimates we can obtain an estimate of the graph structure for each
point $i \in [n]$. 

The choice of the penalty term is motivated by the work on
penalization using total variation \citep{rinaldo09properties,
  mammen97locally}, which results in a piece-wise constant approximation
of an unknown regression function. The fusion-penalty has also been
applied in the context of multivariate linear regression
\cite{tibshirani05sparsity}, where the coefficients that are spatially
close, are also biased to have similar values. As a result, nearby
coefficients are fused to the same estimated value. Instead of
penalizing the $\ell_1$ norm on the difference between coefficients,
we use the $\ell_2$ norm in order to enforce that all the changes
occur at the same point. 

The objective \eqref{eq:penalized-estimation} estimates the
neighborhood of one node in a graph for all time-points. After solving
the objective \eqref{eq:penalized-estimation} for all nodes $a \in V$,
we need to combine them to obtain the graph structure. We will use the
following procedure to combine $\{ \hat \betab^a \}_{a \in V}$,
\begin{equation*}
  \hat E_i = \{ (a, b)\ :\ \max(|\beta^a_{b, i}|, |\beta^b_{a, i}|) >
  0 \}, \qquad i \in [n].
\end{equation*}
That is, an edge between nodes $a$ and $b$ is included in the graph if
at least one of the nodes $a$ or $b$ is included in the neighborhood
of the other node. We use the $\max$ operator to combine different
neighborhoods as we believe that for the purpose of network
exploration it is more important to occasionally include spurious
edges than to omit relevant ones. For further discussion on the
differences between the min and the max combination, we refer an
interested reader to \cite{banerjee08model}.

\subsection{Numerical procedure}

Finding a minimizer $\hat \betab^a$ of \eqref{eq:penalized-estimation}
can be a computationally challenging task for an off-the-shelf convex
optimization procedure. We propose too use an accelerated gradient
method with a smoothing technique \citep{nesterov05smooth}, which
converges in $\Ocal(1/\epsilon)$ iterations where $\epsilon$ is the
desired accuracy. 

We start by defining a smooth approximation of the fused penalty
term. Let $\Hb \in \RR^{n \times n-1}$ be a matrix with elements
\begin{equation*}
  H_{ij} = \left\{ 
  \begin{array}{cl}
    -1 & \text{if } i = j\\
     1 & \text{if } i = j + 1\\
     0 & \text{otherwise.}
  \end{array}\right.
\end{equation*}
With the matrix $\Hb$ we can rewrite the fused penalty term as
$2\lambda_1\sum_{i=1}^{n-1}\norm{(\betab \Hb)_{\cdot, i}}_2$ and using
the fact that the $\ell_2$ norm is self dual \citep[for example,
see][]{boyd04convex} we have the following representation
\begin{equation}
  \label{eq:dual-norm}
  2\lambda_1\sum_{i=2}^{n}\norm{\betab_{\cdot,i}-\betab_{\cdot,i-1}}_2
  = 
  \max_{\Ub \in \Qcal}\ \dotpm{\Ub}{2\lambda_1 \betab \Hb}
\end{equation}
where $\Qcal := \{ \Ub \in \RR^{p-1 \times n-1}\ :\ \norm{\Ub_{\cdot,
    i}}_2 \leq 1,\ \forall i \in [n-1] \}$. The following function is
defined as a smooth approximation to the fused penalty,
\begin{equation}
  \label{eq:smooth-approx}
  \Psi_{\mu}(\betab) :=  \max_{\Ub \in \Qcal}\ \dotpm{\Ub}{2\lambda_1
    \betab \Hb} - \mu \norm{\Ub}_F^2
\end{equation}
where $\mu > 0$ is the smoothness parameter. It is easy to see that
\begin{equation*}
  \Psi_{\mu}(\betab) \leq \Psi_0(\betab)
  \leq\Psi_{\mu}(\betab) + \mu(n-1).
\end{equation*}
Setting the smoothness parameter to $\mu =
\smallfrac{\epsilon}{2(n-1)}$, the correct rate of convergence is
ensured.  Let $\Ub_\mu(\betab)$ be the optimal solution of the
maximization problem in \eqref{eq:smooth-approx}, which can be
obtained analytically as
\begin{equation}
  \label{eq:optimal-dual-variable}
  \Ub_\mu(\betab) = \Pi_{\Qcal}\left( \frac{\lambda \betab \Hb}{\mu} \right)
\end{equation}
where $\Pi_{\Qcal}(\cdot)$ is the projection operator onto the set
$\Qcal$. From Theorem 1 in \cite{nesterov05smooth}, we have that
$\Psi_{\mu}(\betab)$ is continuously differentiable and convex, with
the gradient 
\begin{equation}
  \label{eq:gradient-smooth-approx}
  \nabla \Psi_{\mu}(\betab) = 2\lambda_1\Ub_{\mu}(\betab)\Hb'
\end{equation}
that is Lipschitz continuous. 

With the above defined smooth approximation, we focus on minimizing
the following objective 
\begin{equation*}
  \min_{\betab \in \RR^{p-1 \times n}} F(\betab) :=
  \min_{\betab \in \RR^{p-1 \times n}} \Lcal(\betab) +
  \Psi_{\mu}(\betab) + 2\lambda_2 \norm{\betab}_1.
\end{equation*}
Following \cite{beck09fast} (see also \cite{nesterov07gradient}), we
define the following quadratic approximation of $F(\betab)$ at a point
$\betab_0$
\begin{equation}
  \label{eq:quadratic-approximation}
\begin{aligned}
  Q_L(\betab, \betab_0) := \Lcal(\betab_0) & + \Psi_{\mu}(\betab_0)  + 
  \dotpm{\betab-\betab_0}{\nabla \Lcal(\betab_0)+\nabla \Psi(\betab_0)}
  \\ &+ \frac{L}{2} \norm{\betab-\betab_0}_F^2 + 2\lambda_2
  \norm{\betab}_1
\end{aligned}
\end{equation}
where $L > 0$ is the parameter chosen as an upper bounds for the
Lipschitz constant of $\nabla \Lcal + \nabla \Psi$. Let
$p_L(\betab_0)$ be a minimizer of $Q_L(\betab, \betab_0)$. Ignoring
constant terms, $p_L(\betab_0)$ can be obtained as
\begin{equation*}
  \label{eq:minimizer_approximation}
  p_L(\betab_0) = \argmin_{\betab \in \RR^{p-1 \times n}}
  \ \frac{1}{2}
  \bigg\|\betab - \Big(\betab_0 - \frac{1}{L}\big(\nabla \Lcal +
    \nabla \Psi\big)(\betab_0)\Big)\bigg\|_F^2 
  + \frac{2\lambda_2}{L}\big\|\betab\big\|_1.
\end{equation*}
It is clear that $p_L(\betab_0)$ is the unique minimizer, which can be
obtained in a closed form, as a result of the soft-thresholding,
\begin{equation}
  \label{eq:minimizer-approximation-closed-form}
  p_L(\betab_0) = T\bigg(\betab_0 - \frac{1}{L}\big(\nabla \Lcal +
    \nabla \Psi\big)(\betab_0), \frac{2\lambda_2}{L}\bigg)
\end{equation}
where $T(x, \lambda) = \sgn(x)\max(0, |x| - \lambda)$ is the
soft-thresholding operator that is applied element-wise. 

In practice, an upper bound on the Lipschitz constant of $\nabla \Lcal
+ \nabla \Psi$ can be expensive to compute, so the parameter $L$ is
going to be determined iteratively. Combining all of the above, we
have the following algorithm.
%%%%%%%%%%%%%%%%%%%%%%
\begin{algorithm}[thb]
\normalsize
\SetAlgoSkip{bigskip}
\TitleOfAlgo{Accelerated Gradient Method for
  Equation~\eqref{eq:penalized-estimation}}
\DontPrintSemicolon
\BlankLine
\KwIn{$\Xb \in \RR^{n \times p}$, $\betab_0 \in \RR^{p-1
  \times n}$, $\gamma > 1$, $L > 0$, $\mu =
\frac{\epsilon}{2(n-1)}$}
\KwOut{$\hat{\betab}^{a}$}
\BlankLine
Initialize $k:=1$, $\alpha_k := 1$, $\zb_k := \betab_0$ \;
\Repeat{convergence}{
\While{$F(p_L(\zb_k)) > Q_L(p_L(\zb_k), \zb_k)$}
 {\BlankLine$L := \gamma L$\BlankLine} 
\BlankLine$\betab_k := p_L(\zb_k)$
\qquad (using
  Eq.~\eqref{eq:minimizer-approximation-closed-form})
\;
\BlankLine
$\alpha_{k+1} := \frac{1+\sqrt{1+4\alpha_k}}{2}$ \;\BlankLine
$\zb_{k+1} := \betab_k + \frac{\alpha_k - 1}{\alpha_{k+1}}
\big(\betab_k - \betab_{k-1} \big)$\;\BlankLine
} 
$\hat \betab^a := \betab_k$ \;
\end{algorithm}
%%%%%%%%%%%%%%%%%%%%%%
In the algorithm, $\gamma$ is a constant used to increase the estimate
of the Lipschitz constant $L$. Compared to the gradient descent method
(which can be obtain by iterating $\betab_{k+1} = p_L(\betab_k)$), the
accelerated gradient method updates two sequences $\{\betab_k\}$ and
$\{\zb_k\}$ recursively. Instead of performing the gradient step from
the latest approximate solution $\betab_k$, the gradient step is
performed from the search point $\zb_k$ that is obtained as a linear
combination of the last to approximate solutions $\betab_{k-1}$ and
$\betab_k$. Since the condition $F(p_L(\zb_k)) \leq Q_L(p_L(\zb_k),
\zb_k)$ is satisfied in every iteration, we have the algorithm
converges in $\Ocal(1/\epsilon)$ iterations following
\cite{beck09fast}. As the convergence criterion, we stop iterating
once the relative change in the objective value is below some
threshold value. 

\subsection{Tuning parameter selection}
\label{sec:bic-score}

The penalty parameters $\lambda_1$ and $\lambda_2$ control the
complexity of the estimated model. In this work, we propose to use the
BIC score to select the tuning parameters. Define the BIC score for
each node $a \in V$ as 
\begin{equation}
  \label{eq:bic-score}
  {\rm BIC}_a(\lambda_1, \lambda_2) := 
    \log \frac{\Lcal(\hat \betab^a)}{n}
    + \frac{\log n}{n} \sum_{j \in [\hat B]} |S(\hat \thetab^{a,j})|
\end{equation}
where $\Lcal(\cdot)$ is defined in~\eqref{eq:loss} and $\hat \betab^a
= \hat \betab^a(\lambda_1, \lambda_2)$ is a solution
of~\eqref{eq:penalized-estimation}. The penalty parameters can now be
chosen as 
\begin{equation}
  \label{eq:penalty-parameters-bic-chosen}
  \{\hat \lambda_1, \hat \lambda_2\} = \argmin_{\lambda_1, \lambda_2} 
      \sum_{a \in V} {\rm BIC}_a(\lambda_1, \lambda_2).
\end{equation}
We will use the above formula to select the tuning parameters in our
simulations, where we are going to search for the best choice of
parameters over a grid.

\section{Theoretical results} \label{sec:theoretical-results}

This section is going to address the statistical properties of the
estimation procedure presented in Section~\ref{sec:model}. The
properties are addressed in an asymptotic framework by letting the
sample size $n$ grow, while keeping the other parameters fixed. For
the asymptotic framework to make sense, we assume that there exists a
fixed unknown sequence of numbers $\{\tau_j\}$ that defines the
partition boundaries as $T_j = \lfloor n\tau_j \rfloor$, where
$\lfloor a \rfloor$ denotes the largest integer smaller that $a$. This
assures that as the number of samples grow, the same fraction of
samples falls into every partition. We call $\{\tau_j\}$ the boundary
fractions.

We give sufficient conditions under which the sequence $\{ \tau_j\}$
is consistently estimated. In particular, if the number of partition
blocks is estimated correctly, then we show that $\max_{j \in [B]}
|\hat T_j - T_j| \leq n\delta_n$ with probability tending to 1, where
$\{\delta_n\}_n$ is a non-increasing sequence of positive numbers that
tends to zero. If the number of partition segments is over estimated,
then we show that for a distance defined for two sets $A$ and $B$ as
\begin{equation}
  \label{eq:distance-sets}
  h(A, B) := \sup_{b \in B} \inf_{a \in A} |a - b|,
\end{equation}
we have $h(\hat \Tcal, \Tcal) \leq n\delta_n$ with probability tending
to 1. With the boundary segments consistently estimated, we further
show that under suitable conditions for each node $a \in V$ the
correct neighborhood is selected on all estimated block partitions
that are sufficiently large. 

The proof technique employed in this section is quite involved, so we
briefly describe the steps used. Our analysis is based on careful
inspection of the optimality conditions that a solution $\hat
\betab^a$ of the optimization problem~\eqref{eq:penalized-estimation}
need to satisfy. The optimality conditions for $\hat \betab^a$ to be a
solution of~\eqref{eq:penalized-estimation} are given in
$\S$\ref{sec:conv-part-bound}.  Using the optimality conditions, we
establish the rate of convergence for the partition boundaries. This
is done by proof by contradiction. Suppose that there is a solution
with the partition boundary $\hat \Tcal$ that satisfies $h(\hat \Tcal,
\Tcal) \geq n\delta_n$. Then we show that, with high-probability, all
such solutions will not satisfy the KKT conditions and therefore
cannot be optimal. This shows that all the solutions to the
optimization problem \eqref{eq:penalized-estimation} result in
partition boundaries that are ``close'' to the true partition
boundaries, with high-probability. Once it is established that $\hat
\Tcal$ and $\Tcal$ satisfy $h(\hat \Tcal, \Tcal) \leq n\delta_n$, we
can further show that the neighborhood estimates are consistently
estimated, under the assumption that the estimated blocks of the
partition have enough samples. This part of the analysis follows the
commonly used strategy to prove that the Lasso is sparsistent
\citep[see for
example][]{bunea08honest,wainwright06sharp,meinshausen06high}, however
important modifications are required due to the fact that position of
the partition boundaries are being estimated.

Our analysis is going to focus on one node $a \in V$ and its
neighborhood. However, using the union bound over all nodes in $V$, we
will be able to carry over conclusions to the whole graph. To simplify
our notation, when it is clear from the context, we will omit the
superscript $a$ and write $\hat \betab$, $\hat \thetab$ and $S$, etc.,
to denote $\hat \betab^a$, $\hat \thetab^a$ and $S_a$, etc.

\subsection{Assumptions}
\label{sec:assumptions}

Before presenting our theoretical results, we give some definitions
and assumptions that are going to be used in this section. Let
$\Delta_{\min} := \min_{j \in [B]} |T_j - T_{j-1}|$ denote the minimum
length between change points, $\xi_{\min} := \min_{a \in V} \min_{j
  \in [B-1]} \norm{\thetab^{a,j+1} - \thetab^{a,j}}_2$ denote the
minimum jump size and $\theta_{\min} = \min_{a \in V} \min_{j \in [B]}
\min_{b \in S^j} |\theta_b^{a,j}|$ the minimum coefficient size.
Throughout the section, we assume that the following holds.

\begin{description}
\item[\bf A1] There exist two constants $\phi_{\min} > 0$ and
  $\phi_{\max} < \infty$ such that
  \begin{equation*}
    \phi_{\min} = \min\ \{\Lambda_{\min}(\Sigmab^j)\ :\ j
    \in [B], a \in V\}
  \end{equation*}
  and
  \begin{equation*}
    \phi_{\max} = \max\ \{\Lambda_{\max}(\Sigmab^j)\ :\ j
    \in [B], a \in V\}.
  \end{equation*}

\item[\bf A2] Variables are scaled so that $\sigma_{aa}^j = 1$ for all
  $j \in [B]$ and all $a \in V$.
\end{description}
The assumption {\bf A1} is commonly used to ensure that the model is
identifiable. If the population covariance matrix is ill-conditioned,
the question of the correct model identification if not well defined,
as a neighborhood of a node may not be uniquely defined. The
assumption {\bf A2} is assumed for the simplicity of the
presentation. The common variance can be obtained through scaling.

\begin{description}
\item[\bf A3] There exists a constant $M > 0$ such that 
  \begin{equation*}
    \max_{a \in V}\max_{j,k \in [B]} \| \thetab^{a,k} - \thetab^{a,j} \|_2 \leq M.    
  \end{equation*}
\end{description}
The assumption {\bf A3} states that the difference between
coefficients on two different blocks, $\norm{\thetab^{a,k} -
  \thetab^{a,j}}_2$, is bounded for all $j, k \in [B]$. This
assumption is simply satisfied if the coefficients $\thetab^a$ were
bounded in the $\ell_2$ norm.

\begin{description}
\item[\bf A4] There exist a constant $\alpha \in (0, 1]$, such that
  the following holds
  \begin{equation*}
    \max_{j \in [B]} \opnorm{ \Sigmab_{N_a^j S_a^j} 
      ( \Sigmab_{S_a^j S_a^j})^{-1}}{\infty} \leq 1 - \alpha,
    \qquad \forall a \in V.
  \end{equation*} 
\end{description}
The assumption {\bf A4} states that the variables in the neighborhood
of the node~$a$, $S_a^j$, are not too correlated with the variables in
the set $N_a^j$.  This assumption is necessary and sufficient for
correct identification of the relevant variables in the Lasso
regression problems \citep[see for
example][]{zhao06model,geer09conditions}. Note that this condition is
sufficient also in our case when the correct partition boundaries are
not known.

\begin{description}
\item[\bf A5] The minimum coefficient size $\theta_{\min}$ satisfies
  $\theta_{\min} = \Omega(\sqrt{\log (n) / n})$.
\end{description}
The lower bound on the minimum coefficient size $\theta_{\min}$ is
necessary, since if a partial correlation coefficient is too close to
zero the edge in the graph would not be detectable.

\begin{description}
\item[\bf A6] The sequence of partition boundaries $\{ T_j\}$ satisfy
  $T_j = \lfloor n\tau_j \rfloor$, where $\{ \tau_j \}$ is a fixed,
  unknown sequence of the boundary fractions belonging to $[0,1]$.
\end{description}
The assumption is needed for the asymptotic setting. As $n \rightarrow
\infty$, there will be enough sample points in each of the blocks to
estimate the neighborhood of nodes correctly. 

\subsection{Convergence of the partition boundaries}
\label{sec:conv-part-bound}

In this subsection we establish the rate of convergence of the
boundary partitions for the estimator~\eqref{eq:penalized-estimation}.
We start by giving a lemma that characterizes solutions of the
optimization problem given in \eqref{eq:penalized-estimation}. Note
that the optimization problem in \eqref{eq:penalized-estimation} is
convex, however, there may be multiple solutions to it, since it is
not strictly convex.
\begin{lemma} \label{lem:kkt_conditions} A matrix $\hat \betab$ is
  optimal for the optimization problem \eqref{eq:penalized-estimation}
  if and only if there exist a collection of subgradient vectors
  $\{\hat \zb_i\}_{i \in [2:n]}$ and $\{\hat \yb_i\}_{i \in [n]}$, with
  $\hat \zb_i \in \partial \norm{\hat \betab_{\cdot, i} - \hat
    \betab_{\cdot, i-1}}_2$ and $\hat \yb_i \in
  \partial \norm{\hat \betab_{\cdot, i}}_1$, that satisfies 
  \begin{equation}
    \label{eq:kkt_conditions}    
    \sum_{i=k}^n \xb_{i,\bks a} 
       \dotp{\xb\iba}{\hat \betab_{\cdot, i} - \betab_{\cdot, i}}
    - \sum_{i=k}^n \xb\iba \epsilon_i 
    + \lambda_1 \hat \zb_k + \lambda_2 \sum_{i=k}^n \hat \yb_i = 0
  \end{equation}
  for all $k \in [n]$ and $\hat \zb_1 = \hat \zb_{n+1} = \0b$.
\end{lemma}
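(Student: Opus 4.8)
The plan is to derive the stated KKT system from the subdifferential characterization of a minimizer of the convex (but not strictly convex) objective in \eqref{eq:penalized-estimation}. Since $F(\betab) = \Lcal(\betab) + \pen_{\lambda_1,\lambda_2}(\betab)$ is a finite sum of a smooth convex quadratic and two convex (nonsmooth) penalty terms with no constraints, $\hat\betab$ is optimal if and only if $\0b \in \partial F(\hat\betab)$, and by the Moreau--Rockafellar sum rule (all terms are finite everywhere, so no qualification issue) this is $\0b \in \nabla\Lcal(\hat\betab) + 2\lambda_1 \partial \big(\sum_{i=2}^n \norm{\betab_{\cdot,i}-\betab_{\cdot,i-1}}_2\big) + 2\lambda_2 \partial \norm{\betab}_1$, evaluated at $\hat\betab$. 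The first step is thus to write each piece explicitly: $\nabla\Lcal(\betab)$ has $(b,i)$-entry $-2 x_{i,b}\big(x_{i,a} - \sum_{c\in\bks a} x_{i,c}\beta_{c,i}\big) = -2x_{i,b}\,\dotp{\xb\iba}{\betab_{\cdot,i}} + 2x_{i,b} x_{i,a}$; using the regression identity $x_{i,a} = \dotp{\xb\iba}{\betab_{\cdot,i}^{\mathrm{true}}} + \epsilon_i$ (model \eqref{eq:model} with the population $\thetab$ on block $\Bcal^j$), the column $i$ of $\nabla\Lcal(\betab)$ becomes $2\,\xb\iba\big(\dotp{\xb\iba}{\betab_{\cdot,i} - \betab_{\cdot,i}^{\mathrm{true}}} - \epsilon_i\big)$, which matches the $i$-th summand appearing in \eqref{eq:kkt_conditions} after dropping the harmless factor $2$. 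The $\ell_1$ term contributes $2\lambda_2 \yb$ with $\yb_i\in\partial\norm{\betab_{\cdot,i}}_1$ columnwise; the fused term contributes $2\lambda_1 \zb$ where column $i$ of the subgradient of $\sum_{i'} \norm{\betab_{\cdot,i'}-\betab_{\cdot,i'-1}}_2$ is $\zb_i - \zb_{i+1}$, with $\zb_i \in \partial\norm{\betab_{\cdot,i}-\betab_{\cdot,i-1}}_2$ for $i\in[2:n]$ and the convention $\zb_1 = \zb_{n+1} = \0b$ (since $\betab_{\cdot,0}$ and $\betab_{\cdot,n+1}$ do not exist).

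The second step is to convert the "columnwise, for every $i$" stationarity equation into the "tail-sum, for every $k$" form displayed in the lemma. Writing the columnwise condition as, for each $i\in[n]$,
\begin{equation*}
  \xb\iba\big(\dotp{\xb\iba}{\hat\betab_{\cdot,i}-\betab_{\cdot,i}} - \epsilon_i\big) + \lambda_1(\hat\zb_i - \hat\zb_{i+1}) + \lambda_2 \hat\yb_i = 0,
\end{equation*}
I sum this identity over $i = k, k+1, \dots, n$. The fused part telescopes: $\sum_{i=k}^n (\hat\zb_i - \hat\zb_{i+1}) = \hat\zb_k - \hat\zb_{n+1} = \hat\zb_k$ using $\hat\zb_{n+1}=\0b$. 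The loss and $\ell_1$ parts give precisely $\sum_{i=k}^n \xb\iba\,\dotp{\xb\iba}{\hat\betab_{\cdot,i}-\betab_{\cdot,i}} - \sum_{i=k}^n \xb\iba\epsilon_i + \lambda_2\sum_{i=k}^n \hat\yb_i$. This yields exactly \eqref{eq:kkt_conditions} for every $k\in[n]$. Conversely (for the "if" direction), given such vectors satisfying the tail-sum equations for all $k$, subtracting the equation at $k+1$ from the one at $k$ recovers the columnwise stationarity at index $k$ for $k\in[2:n]$, and the $k=1$ equation together with $\hat\zb_1=\0b$ recovers it at $i=1$; summing those columnwise equations back up against an arbitrary direction shows $\0b\in\partial F(\hat\betab)$, so $\hat\betab$ is a global minimizer by convexity. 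I should also note that $\partial\norm{\cdot}_2$ at the origin is the whole unit ball, so $\hat\zb_i$ is only constrained by $\norm{\hat\zb_i}_2\le 1$ wherever $\hat\betab_{\cdot,i}=\hat\betab_{\cdot,i-1}$, which is consistent with the subgradient membership statement.

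The only mildly delicate point — the "main obstacle," though it is more bookkeeping than difficulty — is the equivalence between the telescoped tail-sum form and the raw columnwise form, i.e., making sure the boundary conventions $\hat\zb_1 = \hat\zb_{n+1} = \0b$ are handled correctly and that the "for all $k\in[n]$" quantifier genuinely encodes all $n$ columnwise equations with no loss of information (it does: the $k$-th minus $(k{+}1)$-th differences give equations for $k=1,\dots,n-1$, and $k=n$ gives the last one directly). A secondary point to state cleanly is the substitution $x_{i,a} = \dotp{\xb\iba}{\betab_{\cdot,i}} + \epsilon_i$: here $\betab_{\cdot,i}$ denotes the true (population) coefficient vector $\thetab^{a,j}$ for $i\in\Bcal^j$, and $\epsilon_i = x_{i,a} - \dotp{\xb\iba}{\thetab^{a,j}}$ is the population regression residual, which is what makes $\nabla\Lcal$ take the form with $\hat\betab_{\cdot,i}-\betab_{\cdot,i}$ and $\epsilon_i$ separated; this is just the definition of $\epsilon_i$ and requires no probabilistic input at this stage.
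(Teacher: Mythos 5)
Your proof is correct, but it takes a modestly different route from the paper's. The paper reparameterizes to increment variables $\gammab_1=\hat\betab_{\cdot,1}$, $\gammab_i=\hat\betab_{\cdot,i}-\hat\betab_{\cdot,i-1}$ ($i\ge 2$), rewrites the objective in these variables, and sets to zero the subdifferential with respect to each $\gammab_k$; since $\gammab_k$ enters every column $i\ge k$ through $\betab_{\cdot,i}=\sum_{j\le i}\gammab_j$, the tail sums in \eqref{eq:kkt_conditions} appear immediately, with $\hat\zb_k$ the subgradient of $\norm{\cdot}_2$ at $\gammab_k$ and no telescoping step. You instead stay in the original variables, write the per-column stationarity condition, and obtain the tail-sum form by summing over $i\ge k$ and telescoping the fused-penalty subgradients, recovering the columnwise conditions in the converse by differencing consecutive tail equations; the two computations are adjoint to one another and give the same system. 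What your version buys: by working with the joint subdifferential via the Moreau--Rockafellar sum rule and insisting on a single consistent family $\{\hat\zb_i\},\{\hat\yb_i\}$ shared across all $k$, you make explicit a point the paper's blockwise phrasing glosses over --- in the $\gammab$ coordinates the $\ell_1$ term is not block-separable, so blockwise subgradient conditions characterize optimality only with subgradient choices coupled across blocks, which is exactly what the lemma's statement encodes. What the paper's version buys: the tail sums and the boundary conventions $\hat\zb_1=\hat\zb_{n+1}=\0b$ fall out of the reparameterization with no bookkeeping. One small slip in your converse: the $k=1$ tail equation by itself does not give the column-$1$ condition; it is the difference of the $k=1$ and $k=2$ equations that does (with $k=n$ giving column $n$ directly), as your own closing parenthetical states correctly, so this is a wording issue rather than a gap.
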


The following theorem provides the convergence rate of the estimated
boundaries of $\hat \Tcal$, under the assumption that the correct
number of blocks is known.

\begin{theorem} \label{thm:consistent-estimation-boundaries:known-num-blocks}
  Let $\{\xb_i\}_{i \in [n]}$ be a sequence of observation according
  to the model in \eqref{eq:model}. Assume that {\bf A1}-{\bf A3} and
  {\bf A5}-{\bf A6} hold. Suppose that the penalty parameters
  $\lambda_1$ and $\lambda_2$ satisfy
  \begin{equation}
    \label{eq:scalling-penalty-parameters}
    \lambda_1 \asymp \lambda_2 = \Ocal(\sqrt{ \log(n) / n}).
  \end{equation}
  Let $\{\hat \betab_{\cdot, i}\}_{i \in [n]}$ be any solution of
  \eqref{eq:penalized-estimation} and let $\hat \Tcal$ be the
  associated estimate of the block partition. Let $\{\delta_n\}_{n
    \geq 1}$ be a non-increasing positive sequence that converges to
  zero as $n \rightarrow \infty$ and satisfies $\Delta_{\min} \geq
  n\delta_n$ for all $n \geq 1$. Furthermore, suppose that $(n\delta_n
  \xi_{\min})^{-1} \lambda_1 \rightarrow 0$,
  $\xi_{\min}^{-1}\sqrt{p}\lambda_2 \rightarrow 0$ and
  $(\xi_{\min}\sqrt{n\delta_n})^{-1}\sqrt{p\log n} \rightarrow 0$,
  then if $|\hat \Tcal| = B + 1$ the following holds
  \begin{equation*}
    \label{eq:rate_convergence:change_points}
    \PP[\max_{j \in [B]} |T_j - \hat T_j| \leq n \delta_n] 
     \xrightarrow{n \rightarrow \infty} 1.
  \end{equation*}
\end{theorem}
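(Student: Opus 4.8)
The strategy is a proof by contradiction built directly on the KKT characterization of Lemma~\ref{lem:kkt_conditions}. Suppose, contrary to the claim, that with probability bounded away from zero there is a solution $\hat\betab$ whose associated partition $\hat\Tcal$ has $|\hat\Tcal| = B+1$ but $\max_{j\in[B]}|T_j - \hat T_j| > n\delta_n$. Since $|\hat\Tcal| = B+1$, the estimated change points can be matched one-to-one with the true ones, and the failure event means that some true boundary $T_{j_0}$ has no estimated boundary within distance $n\delta_n$ of it. Because $\Delta_{\min}\ge n\delta_n$, the interval around $T_{j_0}$ of radius $n\delta_n$ lies entirely inside $\Bcal^{j_0-1}\cup\Bcal^{j_0}$ and contains no point of $\hat\Tcal$; hence $\hat\betab$ is constant (equal to some $\hat\thetab$) on an interval straddling $T_{j_0}$, even though the true coefficients jump by at least $\xi_{\min}$ there. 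The plan is to derive a contradiction from the KKT system~\eqref{eq:kkt_conditions} evaluated at two indices $k_1 < T_{j_0} \le k_2$ lying inside this constant stretch.

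The key algebraic step is to difference the KKT equations~\eqref{eq:kkt_conditions} at $k_1$ and $k_2$. Subtracting gives
\begin{equation*}
  \sum_{i=k_1}^{k_2-1}\xb_{i,\bks a}\dotp{\xb\iba}{\hat\betab_{\cdot,i}-\betab_{\cdot,i}}
  - \sum_{i=k_1}^{k_2-1}\xb\iba\epsilon_i
  + \lambda_1(\hat\zb_{k_1}-\hat\zb_{k_2})
  + \lambda_2\sum_{i=k_1}^{k_2-1}\hat\yb_i = 0 .
\end{equation*}
On this block $\hat\betab_{\cdot,i}\equiv\hat\thetab$ is constant, while $\betab_{\cdot,i}$ equals $\thetab^{a,j_0-1}$ for $i<T_{j_0}$ and $\thetab^{a,j_0}$ for $i\ge T_{j_0}$. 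I would split the first sum accordingly and replace the empirical Gram matrices $\tfrac{1}{m}\sum \xb_{i,\bks a}\xb'\iba$ over sub-blocks of length $m$ by the population covariances $\Sigmab^{j}_{\bks a,\bks a}$ up to fluctuations controlled by standard sub-Gaussian/sub-exponential concentration (the lengths of the relevant sub-blocks are both $\ge n\delta_n$, which diverges, so these errors are $O_p(\sqrt{p\log n / (n\delta_n)})$ after scaling). The leading deterministic term then forces $\hat\thetab$ to be a convex-combination-type average of $\thetab^{a,j_0-1}$ and $\thetab^{a,j_0}$, so that $\hat\thetab$ must differ from at least one of them by $\Omega(\xi_{\min})$ in an appropriate norm, using {\bf A1} to convert back from $\Sigmab$-weighted norms to Euclidean norms. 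Plugging this lower bound into the differenced equation makes the deterministic term of order $n\delta_n\,\xi_{\min}$ (times eigenvalue constants), while the remaining terms are bounded: the noise term is $O_p(\sqrt{n\delta_n\,p\log n})$, the subgradient term $\lambda_1(\hat\zb_{k_1}-\hat\zb_{k_2})$ has norm at most $2\lambda_1$, and the $\ell_1$-subgradient sum has norm at most $\sqrt{p}\,\lambda_2 \cdot$(number of terms) — here I would instead localize by choosing $k_2 - k_1$ comparable to $n\delta_n$ so that the $\ell_1$ term contributes $O(n\delta_n\sqrt{p}\,\lambda_2)$. Dividing through by $n\delta_n\,\xi_{\min}$, the three hypotheses $(n\delta_n\xi_{\min})^{-1}\lambda_1\to0$, $\xi_{\min}^{-1}\sqrt{p}\,\lambda_2\to0$, and $(\xi_{\min}\sqrt{n\delta_n})^{-1}\sqrt{p\log n}\to0$ are exactly what is needed to drive every term except the leading one to zero, yielding $c > 0 \le o_p(1)$, a contradiction.

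Finally, I would take a union bound over the at most $B$ true boundaries $j_0$, over the (polynomially many in $n$) choices of the offending window $[k_1:k_2]$, and over the $p$ nodes $a\in V$, so that the exceptional probability still tends to zero; the $\log n$ factors in {\bf A5} and in the penalty scaling~\eqref{eq:scalling-penalty-parameters} absorb these union-bound costs. The main obstacle is the middle step: getting a clean, honest lower bound of order $n\delta_n\,\xi_{\min}$ on the deterministic part of the differenced KKT equation while simultaneously controlling all the random fluctuations uniformly over the location of the window $[k_1:k_2]$ — one must be careful that $\hat\thetab$, $\hat\zb$, $\hat\yb$ are themselves data-dependent, so the concentration bounds have to hold on a single high-probability event uniformly in all sub-intervals, not pointwise. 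A secondary subtlety is ruling out the degenerate possibility that $\hat\thetab$ happens to be close to \emph{both} $\thetab^{a,j_0-1}$ and $\thetab^{a,j_0}$, which is impossible precisely because those two vectors are $\xi_{\min}$-separated by definition of $\xi_{\min}$, but this needs to be stated carefully in the $\Sigmab$-weighted geometry.
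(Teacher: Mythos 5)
Your overall architecture --- invoking the KKT characterization of Lemma~\ref{lem:kkt_conditions}, differencing \eqref{eq:kkt_conditions} at two indices, controlling the noise and subgradient terms uniformly over sub-intervals, and matching the three rate hypotheses to the three error terms --- is the same as the paper's, but the central step has a genuine gap. After differencing over a window $[k_1,k_2)$ straddling $T_{j_0}$ on which $\hat\betab$ is constant, the deterministic part is $m_1\hat\Sigmab^{(1)}(\hat\thetab-\thetab^{a,j_0-1})+m_2\hat\Sigmab^{(2)}(\hat\thetab-\thetab^{a,j_0})$ with $m_1=T_{j_0}-k_1$, $m_2=k_2-T_{j_0}$ and $\hat\Sigmab^{(1)},\hat\Sigmab^{(2)}$ the empirical Gram matrices on the two halves. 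As you yourself observe, stationarity then pushes $\hat\thetab$ toward the covariance-weighted average of $\thetab^{a,j_0-1}$ and $\thetab^{a,j_0}$; but at (or near) that average the two summands cancel, so the deterministic term is \emph{not} of order $n\delta_n\xi_{\min}$ --- it can be arbitrarily small even though $\hat\thetab$ is at distance at least $\xi_{\min}/2$ from each true vector. Consequently no contradiction follows from this single equation: the dangerous degenerate case is not ``$\hat\thetab$ close to both regimes'' (which, as you say, is impossible), but ``$\hat\thetab$ close to the weighted average of the two regimes,'' which your argument embraces and then implicitly contradicts by asserting the very lower bound that fails there.

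The paper closes this hole with a two-stage argument your plan lacks. On the event $C_n=\{\max_j|\hat T_j-T_j|<\Delta_{\min}/2\}$ it knows $\hat T_{j+1}>\bar T_j=\lfloor(T_j+T_{j+1})/2\rfloor$, so the estimate is constant on $[T_j,\bar T_j)$, a long stretch interior to a single true block; a second application of \eqref{eq:kkt_conditions} over that stretch pins $\norm{\hat\thetab^{j+1}-\thetab^{j+1}}_2$ to be small, as in \eqref{eq:bound:anj2}. Only after this is the difference over $[\hat T_j,T_j)$ decomposed as $\norm{R_1}_2-\norm{R_2}_2-\norm{R_3}_2$, where the signal term obeys $\norm{R_1}_2\geq\phi_{\min}(T_j-\hat T_j)\xi_{\min}/9$ and the cross term $R_2$ (which carries the potential cancellation) is controlled by the first stage. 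Because this device requires knowing roughly where the neighboring estimated boundaries sit, the paper must also handle the complementary event $C_n^c$ by the separate case analysis over $D_n^{(l)},D_n^{(m)},D_n^{(r)}$ with repeated use of the KKT conditions; your proposal has no analogue of either the pinning step or this configuration analysis. Two smaller points: deducing from $\max_j|T_j-\hat T_j|>n\delta_n$ that some true boundary is $n\delta_n$-isolated from all of $\hat\Tcal$ needs an argument using the separation $\Delta_{\min}$ (and really slack of the form $2n\delta_n\leq\Delta_{\min}$), and no union bound over nodes is needed since both the estimator and the theorem are stated per node $a$.
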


Suppose that $\delta_n = (\log n)^\gamma / n$ for some $\gamma > 1$
and $\xi_{\min} = \Omega(\sqrt{\log n / (\log n)^\gamma})$, the
conditions of theorem 5 are satisfied, and we have that the sequence
of boundary fractions $\{ \tau_j \}$ is consistently estimated. Since
the boundary fractions are consistently estimated, we will see below
that the estimated neighborhood $S(\hat \thetab^{j})$ on the block
$\hat \Bcal^j$ consistently recovers the true neighborhood $S^j$. 

Unfortunately, the correct bound on the number of block $B$ may not be
known. However, a conservative upper bound $B_{\max}$ on the number of
blocks $B$ may be known. Suppose that the sequence of observation is
over segmented, with the number of estimated blocks bounded by
$B_{\max}$. Then the following proposition gives an upper bound on
$h(\hat \Tcal, \Tcal)$ where $h(\cdot,\cdot)$ is defined
in~\eqref{eq:distance-sets}.

\begin{proposition}
  \label{prop:oversegmented-boundaries}
  Let $\{\xb_i\}_{i \in [n]}$ be a sequence of observation according
  to the model in \eqref{eq:model}. Assume that the conditions of
  theorem~\ref{thm:consistent-estimation-boundaries:known-num-blocks}
  are satisfied. Let $\hat \betab$ be a solution
  of~\eqref{eq:penalized-estimation} and $\hat \Tcal$ the
  corresponding set of partition boundaries, with $\hat B$ blocks. If
  the number of blocks satisfy $B \leq \hat B \leq B_{\max}$, then
  \begin{equation*}
    \PP[h(\hat \Tcal, \Tcal) \leq n \delta_n] 
      \xrightarrow{n \rightarrow \infty} 1.
  \end{equation*}  
\end{proposition}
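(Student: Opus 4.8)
The plan is to argue by contradiction, following the same route as the proof of Theorem~\ref{thm:consistent-estimation-boundaries:known-num-blocks}, the only change being that the two-sided matching used there is replaced by a one-sided argument adapted to the asymmetric set distance $h(\cdot,\cdot)$ of \eqref{eq:distance-sets}. Suppose that, with probability bounded away from zero, some solution $\hat\betab$ of \eqref{eq:penalized-estimation} has an associated partition $\hat\Tcal$ with $h(\hat\Tcal,\Tcal) > n\delta_n$. Since $\hat T_0 = T_0 = 1$ and $\hat T_{\hat B} = T_B = n+1$ always belong to $\hat\Tcal$, the supremum defining $h(\hat\Tcal,\Tcal)$ in \eqref{eq:distance-sets} is attained at an interior true boundary $T_{j_0}$, $j_0\in[B-1]$, so that no element of $\hat\Tcal$ lies in the \emph{deterministic} window $W_{j_0} := [T_{j_0}-n\delta_n,\,T_{j_0}+n\delta_n]$; hence $\hat\betab_{\cdot,i}$ equals a common vector $\hat\thetab$ for every integer $i\in W_{j_0}$. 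Because $\Delta_{\min}\geq n\delta_n$, the window $W_{j_0}$ lies inside $\Bcal^{j_0}\cup\Bcal^{j_0+1}$, where the true coefficients equal $\thetab^{a,j_0}$ and $\thetab^{a,j_0+1}$ respectively, with $\norm{\thetab^{a,j_0+1}-\thetab^{a,j_0}}_2\geq\xi_{\min}$. By the triangle inequality $\hat\thetab$ cannot lie within $\xi_{\min}/2$ of both; without loss of generality $\norm{\hat\thetab-\thetab^{a,j_0}}_2\geq\xi_{\min}/2$, so there are $\lfloor n\delta_n\rfloor$ consecutive indices immediately to the left of $T_{j_0}$, all in $\Bcal^{j_0}\cap W_{j_0}$, on which $\hat\betab$ is constant and off the truth by at least $\xi_{\min}/2$ in $\ell_2$ norm (the complementary case is symmetric, using the indices just to the right of $T_{j_0}$, which lie in $\Bcal^{j_0+1}$).

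The contradiction comes from Lemma~\ref{lem:kkt_conditions}. Subtracting the stationarity identity \eqref{eq:kkt_conditions} at $k=k_2:=T_{j_0}$ from the one at $k=k_1:=T_{j_0}-\lfloor n\delta_n\rfloor$, the fused term contributes $\lambda_1(\hat\zb_{k_1}-\hat\zb_{k_2})$, and since $\hat\betab_{\cdot,i}=\hat\thetab$ and $\betab_{\cdot,i}=\thetab^{a,j_0}$ throughout $[k_1:k_2]$ the bilinear term collapses, giving
\[
  \Big(\sum_{i=k_1}^{k_2-1}\xb\iba\,\xb\iba'\Big)\big(\hat\thetab-\thetab^{a,j_0}\big)
  \;=\; \sum_{i=k_1}^{k_2-1}\xb\iba\,\epsilon_i
       \;-\;\lambda_1\big(\hat\zb_{k_1}-\hat\zb_{k_2}\big)
       \;-\;\lambda_2\sum_{i=k_1}^{k_2-1}\hat\yb_i .
\]
The matrix $\sum_{i=k_1}^{k_2-1}\xb\iba\xb\iba'$ is a sum of $\lfloor n\delta_n\rfloor$ independent outer products of $\Ncal(\0b,\Sigmab^{j_0}_{\bks a,\bks a})$ vectors, whose population covariance has smallest eigenvalue at least $\phi_{\min}$ by {\bf A1} and eigenvalue interlacing; since $n\delta_n\rightarrow\infty$ while $p$ is fixed, standard concentration gives $\Lambda_{\min}\big(\sum_{i}\xb\iba\xb\iba'\big)\geq\tfrac12\phi_{\min}\lfloor n\delta_n\rfloor$ with probability tending to one, so the left-hand side has $\ell_2$ norm at least $\tfrac14\phi_{\min}\xi_{\min}\lfloor n\delta_n\rfloor$. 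On the right-hand side, $\norm{\hat\zb_{k_1}-\hat\zb_{k_2}}_2\leq2$ and $\norm{\sum_{i}\hat\yb_i}_2\leq\sqrt{p}\,n\delta_n$ hold deterministically, while $\norm{\sum_{i=k_1}^{k_2-1}\xb\iba\epsilon_i}_2 = \Ocal(\sqrt{p\,n\delta_n\log n})$ with probability tending to one, by sub-exponential concentration of the independent products $x_{i,b}\epsilon_i$. Hence the right-hand side is $\Ocal\big(\sqrt{p\,n\delta_n\log n}+\lambda_1+\sqrt{p}\,\lambda_2\,n\delta_n\big)$, and the three scaling hypotheses inherited from Theorem~\ref{thm:consistent-estimation-boundaries:known-num-blocks}, namely $(n\delta_n\xi_{\min})^{-1}\lambda_1\rightarrow0$, $\xi_{\min}^{-1}\sqrt{p}\,\lambda_2\rightarrow0$ and $(\xi_{\min}\sqrt{n\delta_n})^{-1}\sqrt{p\log n}\rightarrow0$, force it to be $o\big(\phi_{\min}\xi_{\min}\,n\delta_n\big)$. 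The two sides are then incompatible, so $\hat\betab$ violates the KKT conditions and cannot be a solution. A union bound over the $B-1$ interior boundaries $j_0$ yields $\PP[h(\hat\Tcal,\Tcal)\leq n\delta_n]\rightarrow1$.

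I expect the main obstacle to be the bookkeeping rather than a new idea. One must make the index arithmetic around each $T_{j_0}$ precise, so that $[k_1:k_2]$ genuinely sits inside both $\Bcal^{j_0}$ and the constancy window $W_{j_0}$ — this is exactly where $\Delta_{\min}\geq n\delta_n$ and the monotonicity of $\{\delta_n\}$ enter, after replacing $n\delta_n$ by $\lfloor n\delta_n\rfloor$ throughout — and one must verify that the Gram-matrix and noise concentration bounds apply along the data-dependent partition $\hat\Tcal$; since the windows $W_{j_0}$ and the endpoints $k_1,k_2$ are deterministic, it is enough to control the $2(B-1)$ relevant partial sums, and the hypothesis $\hat B\leq B_{\max}$ mainly serves to delimit the over-segmented regime under study and does not otherwise enter this argument. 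The genuinely new point relative to Theorem~\ref{thm:consistent-estimation-boundaries:known-num-blocks} is simply that over-segmentation ($\hat B\geq B$) cannot rescue a bad solution: the argument localizes entirely to a single true boundary and uses only that $\hat\betab$ is constant on $W_{j_0}$, regardless of how many spurious change points $\hat\Tcal$ carries elsewhere. Everything else — the KKT subtraction and the norm estimates — is routine once Lemma~\ref{lem:kkt_conditions} is in hand.
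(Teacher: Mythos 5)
Your proof is correct, and it takes a genuinely different route from the paper's. The paper proves the proposition by re-running the machinery of Theorem~\ref{thm:consistent-estimation-boundaries:known-num-blocks}: it conditions on the estimated number of blocks $\hat B = B'$ and sums over $B \leq B' \leq B_{\max}$, decomposes the bad event at each true boundary $T_j$ into straddling/one-sided events ($\Ecal_{j,1},\Ecal_{j,2},\Ecal_{j,3}$ in the appendix), and bounds each by the two-stage strategy used for the event $A_{n,j,2}$, which needs an auxiliary application of the KKT conditions on the half-block $[T_j:\bar T_j]$ to control the estimation error $\norm{\thetab^{j+1}-\hat\thetab^{j+1}}_2$ of the adjacent estimated coefficient (the $R_2$-type term). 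You avoid all of this: the bad event $h(\hat\Tcal,\Tcal)>n\delta_n$ says directly that no estimated change point falls in a \emph{deterministic} window around some interior $T_{j_0}$, so $\hat\betab$ is a single constant $\hat\thetab$ across the true jump, and the triangle inequality localizes a discrepancy of at least $\xi_{\min}/2$ onto one deterministic half-window contained in a single true block; one KKT subtraction, one minimum-eigenvalue bound and one noise bound then give the contradiction under exactly the three scaling hypotheses. This buys three things relative to the paper's route: no control of adjacent-block estimation error is needed; concentration is required only at the $2(B-1)$ fixed half-windows rather than uniformly over all intervals (and $n\delta_n\to\infty$ is indeed guaranteed, since {\bf A3} gives $\xi_{\min}\leq M$ so the condition $(\xi_{\min}\sqrt{n\delta_n})^{-1}\sqrt{p\log n}\to 0$ forces $n\delta_n\gg\log n$); and the hypothesis $\hat B\leq B_{\max}$ is never used, so your argument even slightly strengthens the statement, whereas the paper's union over $B'\leq B_{\max}$ relies on it. The remaining work is only the bookkeeping you already flag (floors, $\lfloor n\delta_n\rfloor>p$ for the Gram-matrix bound, and the explicit union over the $2(B-1)$ boundary/side cases), none of which poses a difficulty.
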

The proof of the proposition follows the same ideas of
theorem~\ref{thm:consistent-estimation-boundaries:known-num-blocks}
and its sketch is given in the appendix.

The above proposition assures us that even if the number of blocks is
overestimated, there will be a partition boundary close to every true
unknown partition boundary.

\subsection{Correct neighborhood selection}

In this section, we give a result on the consistency of the
neighborhood estimation. We will show that whenever the estimated
block $\hat \Bcal^j$ is large enough, say $|\hat \Bcal^j| \geq r_n$
where $\{r_n\}_{n \geq 1}$ is an increasing sequence of numbers that
satisfy $(r_n\lambda_2)^{-1}\lambda_1 \rightarrow 0$ and
$r_n\lambda_2^2 \rightarrow \infty$ as $n \rightarrow \infty$, we
have that $S(\hat \thetab^j) = S(\betab^k)$, where $\betab^k$ is the
true parameter on the true block $\Bcal^k$ that overlaps $\hat
\Bcal^j$ the most. Figure~\ref{fig:neighborhood-selection} illustrates
this idea. The blue region in the figure denotes the overlap between
the true block and the estimated block of the partition. The orange
region corresponds to the overlap of the estimated block with a
different true block. If the blue region is considerably larger than
the orange region, the bias coming from the sample from the orange
region will not be strong enough to disable us from selecting the
correct neighborhood. On the other hand, since the orange region is
small, as seen from
Theorem~\ref{thm:consistent-estimation-boundaries:known-num-blocks},
there is little hope of estimating the neighborhood correctly on that
portion of the sample.

\begin{figure}[t]
  \centering
  \includegraphics[width=0.7\textwidth]{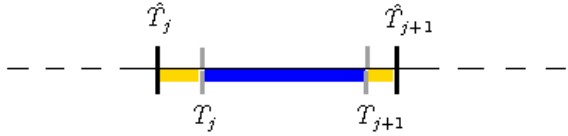}
  \caption{The figure illustrates where we expect to estimate a
    neighborhood of a node consistently. The blue region corresponds
    to the overlap between the true block (bounded by gray lines) and
    the estimated block (bounded by black lines). If the blue region
    is much larger than the orange regions, the additional bias
    introduced from the samples from the orange region will not
    considerably affect the estimation of the neighborhood of a node
    on the blue region. However, we cannot hope to consistently
    estimate the neighborhood of a node on the orange region. }
  \label{fig:neighborhood-selection}
\end{figure}

Suppose that we know that there is a solution to the optimization
problem~\eqref{eq:penalized-estimation} with the partition boundary
$\hat \Tcal$. Then that solution is also a minimizer of the following
objective
\begin{equation}
  \label{eq:objective_known_boundaries}
  \min_{\thetab^{1}, \ldots, \thetab^{\hat B}}\ \
  \sum_{j \in \hat B} 
     \norm{\Xb_a\supb{j} - \Xb\ba\supb{j}\thetab^{j}}_2^2
  + 2\lambda_1 
      \sum_{j = 2}^{\hat B} \norm{\thetab^{j} - \thetab^{j-1}}_2
  + 2\lambda_2 \sum_{j=1}^{\hat B} |\hat \Bcal^j|\norm{\thetab^j}_1.
\end{equation}
Note that the problem \eqref{eq:objective_known_boundaries} does not
give a practical way of solving \eqref{eq:penalized-estimation}, but
will help us to reason about the solutions of
\eqref{eq:penalized-estimation}. In particular, while there may be
multiple solutions to the problem \eqref{eq:penalized-estimation},
under some conditions, we can characterize the sparsity pattern of any
solution that has specified partition boundaries $\hat \Tcal$.

\begin{lemma} \label{lem:unique_pattern} Let $\hat \betab$ be a
  solution to \eqref{eq:penalized-estimation}, with $\hat \Tcal$ being
  an associated estimate of the partition boundaries. Suppose that the
  subgradient vectors satisfy $|\hat y_{i,b}| < 1$ for all $b \not\in
  S(\hat \betab_{\cdot, i})$, then any other solution $\tilde \betab$
  with the partition boundaries $\hat \Tcal$ satisfy $\tilde
  \beta_{b,i} = 0$ for all $b \not\in S(\hat \betab_{\cdot, i})$.
\end{lemma}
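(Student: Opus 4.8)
The plan is to argue directly from the optimality conditions of Lemma~\ref{lem:kkt_conditions}, combined with the convexity of the problem. Fix the partition boundaries $\hat \Tcal$. Since $\hat \betab$ is a solution and $\tilde \betab$ is another solution with the same boundaries, both minimize the (convex) objective $F(\betab) = \Lcal(\betab) + \pen_{\lambda_1,\lambda_2}(\betab)$, and the line segment between them also lies in the solution set. The key idea is that the loss $\Lcal$ is a convex quadratic and the penalty is convex, so if $F(\hat \betab) = F(\tilde\betab) = F\big(\tfrac12(\hat\betab + \tilde\betab)\big)$ then by strict convexity along any direction in which the quantities genuinely differ we get rigidity; in particular the fitted values $\dotp{\xb\iba}{\betab_{\cdot,i}}$ must agree for $\hat\betab$ and $\tilde\betab$ (this is the standard argument that all Lasso solutions have the same fit), and each term of the penalty must also be unchanged under convex combination.

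First I would write the optimality condition \eqref{eq:kkt_conditions} at a fixed node $a$ for the solution $\hat\betab$, isolating the $\ell_1$-subgradient contribution. The condition holds for all $k \in [n]$; taking the difference of the condition at $k$ and at $k+1$ yields, for each $i$,
\begin{equation*}
  \xb_{i,\bks a}\dotp{\xb\iba}{\hat\betab_{\cdot,i} - \betab_{\cdot,i}} - \xb\iba \epsilon_i + \lambda_1(\hat\zb_{i} - \hat\zb_{i+1}) + \lambda_2 \hat\yb_i = 0,
\end{equation*}
so the coordinate-wise stationarity at time $i$ involves $\lambda_2 \hat y_{i,b}$. For $b \notin S(\hat\betab_{\cdot,i})$ we are told $|\hat y_{i,b}| < 1$, i.e.\ the stationarity is strict. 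The same identities hold for $\tilde\betab$ with its own subgradients $\tilde\zb$, $\tilde\yb$. Using that the fitted values coincide for the two solutions (from the first paragraph) and subtracting the two systems, the terms involving the data and noise cancel, leaving a linear relation between $\lambda_1(\hat\zb - \tilde\zb)$ differences and $\lambda_2(\hat\yb - \tilde\yb)$. I would then combine this with the fact that $\tilde\betab$ is feasible and that $\langle \tilde\yb_i, \tilde\betab_{\cdot,i}\rangle = \norm{\tilde\betab_{\cdot,i}}_1$ while $|\tilde y_{i,b}| \le 1$ always.

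The cleanest route to the conclusion: suppose for contradiction that $\tilde\beta_{b,i} \neq 0$ for some $b \notin S(\hat\betab_{\cdot,i})$. Since $\hat\betab$ and $\tilde\betab$ are both minimizers, their average $\bar\betab$ is too, and $\bar\beta_{b,i} = \tfrac12\tilde\beta_{b,i} \neq 0$. Now evaluate the objective: the loss agrees across all three points (same fit), and the fusion penalty is convex; I would show that strict convexity of $|\cdot|$ at $0$ forces $|\bar\beta_{b,i}| < \tfrac12(|\hat\beta_{b,i}| + |\tilde\beta_{b,i}|) = \tfrac12|\tilde\beta_{b,i}|$ unless $\hat\beta_{b,i}$ and $\tilde\beta_{b,i}$ have the same sign — but $\hat\beta_{b,i} = 0$, so this is the degenerate case and no strict drop comes from the $\ell_1$ term alone. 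So the real leverage must come from the stationarity inequality: because $|\hat y_{i,b}| < 1$, moving mass onto coordinate $(b,i)$ strictly increases $\Lcal + \pen$ to first order, contradicting optimality of $\tilde\betab$. Concretely, I would perturb $\tilde\betab$ toward $\hat\betab$ in the coordinate $(b,i)$ and use the directional-derivative characterization of the minimizer together with $|\hat y_{i,b}| < 1$ to exhibit a strict decrease, the contradiction.

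The main obstacle I anticipate is the bookkeeping around the fusion penalty: zeroing out $\tilde\beta_{b,i}$ for a single $i$ may change the piecewise-constant structure, and one must be careful that the argument respects the fixed partition $\hat\Tcal$ — that is, one should argue the pattern statement blockwise using the equivalent formulation \eqref{eq:objective_known_boundaries}, where on block $\hat\Bcal^j$ the problem reduces to a weighted Lasso (with an extra fusion coupling to neighboring blocks that is smooth in $\thetab^j$ wherever $\thetab^j \neq \thetab^{j\pm1}$). On each block, the statement "$|\hat y_{i,b}| < 1 \Rightarrow$ all solutions vanish there" is exactly the classical uniqueness-of-support lemma for the Lasso (e.g.\ as in \cite{wainwright06sharp}); the only new wrinkle is carrying the fusion term along, which contributes a bounded, sign-constrained vector to the stationarity condition and does not affect the strict-inequality argument on the $\ell_1$-inactive coordinates. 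So the proof is: reduce to \eqref{eq:objective_known_boundaries}, invoke equality of fitted values for all solutions sharing $\hat\Tcal$, and then run the standard strict-complementarity argument coordinate by coordinate.
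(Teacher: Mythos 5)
You have the right ingredients — equality of fitted values across all solutions, the observation that convexity of the $\ell_1$ term alone yields nothing here, and the recognition that the strict inequality $|\hat y_{i,b}|<1$ must be the lever — but the concrete mechanism you propose does not close the argument. Perturbing $\tilde\betab$ toward $\hat\betab$ in the single coordinate $(b,i)$ cannot produce the contradiction: $\tilde\betab$ is itself a minimizer, so no first-order strict decrease exists at $\tilde\betab$, and more importantly the certificate $(\hat\zb_i,\hat\yb_i)$ is a subgradient decomposition at $\hat\betab$, not at $\tilde\betab$. A one-coordinate move at $\tilde\betab$ changes both the fit and the fusion terms, and the first-order fusion contribution (of size up to a constant times $\lambda_1$) cannot be controlled by subgradients taken at the other point, so the $\lambda_2\bigl(1-|\hat y_{i,b}|\bigr)$ gain need not dominate. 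The same issue undercuts the closing claim that the fusion term "does not affect the strict-inequality argument": in the plain Lasso, $\lambda\hat\yb$ is uniquely determined by the fitted values, which is exactly why the classical support-uniqueness lemma transfers the certificate to every solution; here the $\ell_1$ subgradient is determined only up to differences of fusion subgradients, so equal fits do not give $\tilde\yb_i=\hat\yb_i$, and the classical lemma cannot simply be invoked.

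The paper closes this gap by a complementary-slackness (duality) argument that pairs the certificate at $\hat\betab$ against $\tilde\betab$ globally. Since the fits agree, $\nabla\Lcal$ is identical at the two solutions, hence $-\nabla\Lcal(\hat\betab)=2\lambda_1\hat\Ub+2\lambda_2\hat\Yb$ (with $\hat\Ub$ the fusion subgradient and $\hat\Yb$ the $\ell_1$ subgradient from $\hat\betab$'s optimality conditions) is also a subgradient of the penalty at $\tilde\betab$. Pairing this vector with $\tilde\betab$, and using that $\hat\Ub$ lies in the dual ball of the fusion seminorm and $|\hat y_{i,b}|\le 1$ everywhere, forces the Fenchel inequalities to hold with equality, in particular $\sum_{b}\hat y_{i,b}\tilde\beta_{b,i}=\norm{\tilde\betab_{\cdot,i}}_1$ for each $i$; since each summand obeys $\hat y_{i,b}\tilde\beta_{b,i}\le|\tilde\beta_{b,i}|$, equality must hold termwise, which forces $\tilde\beta_{b,i}=0$ whenever $|\hat y_{i,b}|<1$. (Equivalently, one can note that the directional derivative of the objective at $\hat\betab$ along the full direction $\tilde\betab-\hat\betab$ must vanish, and lower-bound it by $2\lambda_2\sum(1-|\hat y_{i,b}|)\,|\tilde\beta_{b,i}|$ over the inactive coordinates.) Either route uses the certificate at $\hat\betab$ tested against $\tilde\betab$ as a whole — not a coordinate-wise perturbation at $\tilde\betab$ — and that is the step missing from your proposal.
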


The above Lemma states sufficient conditions under which the sparsity
patter of a solution with the partition boundary $\hat \Tcal$ is
unique. Note, however, that there may other solutions to
\eqref{eq:penalized-estimation} that have different partition
boundaries. 

Now, we are ready to state the following theorem, which establishes
that the correct neighborhood is selected on every sufficiently large
estimated block of the partition.
\begin{theorem}
  \label{thm:correct-neighborhood}
  Let $\{\xb_i\}_{i \in [n]}$ be a sequence of observation according
  to the model in \eqref{eq:model}. Assume that the conditions of
  theorem~\ref{thm:consistent-estimation-boundaries:known-num-blocks}
  are satisfied. In addition, suppose that {\bf A4} also holds. Then,
  if $|\hat \Tcal| = B + 1$, it holds that
  \begin{equation*}
    \PP[S^k = S(\hat \thetab^{k})] \xrightarrow{n \rightarrow \infty}
    1, \qquad \forall k \in [B].
  \end{equation*}
\end{theorem}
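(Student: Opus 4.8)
The plan is to combine the boundary-recovery guarantee of Theorem~\ref{thm:consistent-estimation-boundaries:known-num-blocks} with a Lasso-type sparsistency argument carried out block-by-block on the auxiliary objective~\eqref{eq:objective_known_boundaries}. First I would condition on the high-probability event $\mathcal{E}_n$ of Theorem~\ref{thm:consistent-estimation-boundaries:known-num-blocks}, on which $|\hat T_j - T_j| \le n\delta_n$ for all $j$; since $\Delta_{\min}\ge n\delta_n$ and $\delta_n\to 0$, each estimated block $\hat{\mathcal{B}}^j$ overlaps exactly one true block $\mathcal{B}^j$ in all but a vanishing fraction $O(\delta_n)$ of its indices, and $|\hat{\mathcal{B}}^j| = |\mathcal{B}^j|(1+o(1)) \asymp n$. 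So every $\hat{\mathcal{B}}^j$ is ``large enough'' in the sense of the $r_n$ discussion, and the unique true block $\mathcal{B}^k$ it most overlaps is $\mathcal{B}^k$ with $k=j$.

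Next I would work with~\eqref{eq:objective_known_boundaries}, which any solution with partition boundaries $\hat{\mathcal{T}}$ also minimizes, and run the standard primal-dual witness / KKT-inspection argument for each block $j$. The target is $\theta^{a,j}$, the true coefficient on the overlapping true block. Decompose the block design matrix $\Xb\ba^{\hat{\mathcal{B}}^j}$ into its ``clean'' part (rows from $\mathcal{B}^j\cap\hat{\mathcal{B}}^j$) and its ``contaminated'' part (the $O(n\delta_n)$ rows from adjacent true blocks). On the clean part, assumption {\bf A4} (the incoherence/irrepresentability condition) together with {\bf A1} gives the usual control of $\Sigmab_{N S}(\Sigmab_{SS})^{-1}$; the empirical versions concentrate at rate $\sqrt{\log n/n}$ by sub-Gaussian tail bounds (exactly the ingredients already used to prove Lemma~\ref{lem:kkt_conditions}-type statements). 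The contaminated rows inject an extra bias of order $\delta_n M$ (using {\bf A3} to bound coefficient differences) and an extra stochastic term, both of which I show are $o(\lambda_2)$ under the stated rate conditions $(r_n\lambda_2)^{-1}\lambda_1\to 0$, $r_n\lambda_2^2\to\infty$. The fused term $2\lambda_1\sum_j\|\thetab^j-\thetab^{j-1}\|_2$ contributes at most a $\lambda_1$-sized perturbation to each block's stationarity equation (its subgradient has $\ell_2$-norm $\le 1$), which is again negligible relative to $\lambda_2|\hat{\mathcal{B}}^j| \asymp n\lambda_2$. Feeding these bounds through the witness construction yields, with probability $\to 1$: (i) strict dual feasibility $|\hat y_{i,b}|<1$ for $b\notin S^j$, so by Lemma~\ref{lem:unique_pattern} every solution with boundaries $\hat{\mathcal{T}}$ has $\hat\theta^{a,j}_b=0$ off $S^j$; and (ii) an $\ell_\infty$ bound $\|\hat\theta^{a,j}-\theta^{a,j}\|_\infty = o(\theta_{\min})$, which by {\bf A5} forces $\hat\theta^{a,j}_b\ne 0$ for every $b\in S^j$. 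Hence $S(\hat\thetab^{a,j})=S^j_a$. A union bound over $a\in V$ and $j\in[B]$ (both finite, $p$ fixed) completes the argument.

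The main obstacle I anticipate is handling the bias and variance contributed by the misassigned ``orange-region'' rows cleanly: one must show that even though $\hat{\mathcal{T}}$ is random and only known to lie within $n\delta_n$ of $\mathcal{T}$, the empirical Gram matrix $\tfrac{1}{|\hat{\mathcal{B}}^j|}(\Xb\ba^{\hat{\mathcal{B}}^j})'\Xb\ba^{\hat{\mathcal{B}}^j}$ still satisfies the incoherence condition with a slightly shrunk constant (say $1-\alpha/2$), uniformly over all admissible $\hat{\mathcal{T}}$. This requires a uniform-in-$\hat{\mathcal{T}}$ concentration bound — a union bound over the $O((n\delta_n)^B)$ possible boundary configurations — which is affordable precisely because $\delta_n = (\log n)^\gamma/n$ makes $\log\big((n\delta_n)^B\big) = O(\log\log n) = o(n\lambda_2^2)$, so the extra log factors are absorbed. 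A secondary technical point is ruling out the degenerate case where $\hat{\mathcal{B}}^j$ straddles a true boundary so evenly that no single true block dominates; this is excluded on $\mathcal{E}_n$ since $|\hat T_j-T_j|\le n\delta_n \ll \Delta_{\min}$, guaranteeing a strict majority of clean rows.
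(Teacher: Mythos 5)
Your proposal is correct and follows essentially the same route as the paper: work on the boundary-consistency event of Theorem~\ref{thm:consistent-estimation-boundaries:known-num-blocks}, pass to the reformulated objective~\eqref{eq:objective_known_boundaries}, run a primal--dual witness/KKT argument per block using {\bf A4} for strict dual feasibility, control the bias from the $O(n\delta_n)$ misassigned samples via {\bf A1} and {\bf A3}, invoke Lemma~\ref{lem:unique_pattern} to transfer the sparsity pattern to all solutions with boundaries $\hat\Tcal$, and finish the inclusion $S^k \subseteq S(\hat\thetab^k)$ with an $\ell_\infty$ estimation bound plus {\bf A5}. The only notable difference is that you handle the randomness of $\hat\Tcal$ by a union bound over admissible boundary configurations, whereas the paper builds the uniformity directly into its appendix concentration lemmas, which hold simultaneously over all sufficiently long intervals $[l:r]$; both devices work here.
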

Under the assumptions of
theorem~\ref{thm:consistent-estimation-boundaries:known-num-blocks}
each estimated block is of size $\Ocal(n)$. As a result, there are
enough samples in each block to consistently estimate the underlying
neighborhood structure. Observe that the neighborhood is consistently
estimated at each $i \in \hat \Bcal^j \cap \Bcal^j$ for all $j \in
[B]$ and the error is made only on the small fraction of samples, when
$i \not\in \hat \Bcal^j \cap \Bcal^j$, which is of order
$\Ocal(n\delta_n)$. 

Using proposition~\ref{prop:oversegmented-boundaries} in place of
theorem~\ref{thm:consistent-estimation-boundaries:known-num-blocks},
it can be similarly shown that, for a large fraction of samples, the
neighborhood is consistently estimated even in the case of
over-segmentation. In particular, whenever there is a sufficiently
large estimated block, with $|\hat \Bcal^k \cap \Bcal^j| =
\Ocal(r_n)$, it holds that $S(\hat \Bcal^k) = S^j$ with probability
tending to one.

\section{Numerical studies}
\label{sec:numerical-studies}

In this section, we present a small numerical study on the proposed
algorithm on simulated networks. A full performance test and
application on real world data is beyond the scope of this paper which
mainly focuses on the theory of time-varying model estimation. In all
of our simulations studies we set $p=30$ and $B = 3$ with $|\Bcal_1| =
80$, $|\Bcal_2| = 130$ and $|\Bcal_3| = 90$, so that in total we have
$n=300$ samples. We consider two types of random networks: a chain and
a nearest neighbor network. We measure the performance of the
estimation procedure outlined in $\S$\ref{sec:model} on the following
metrics: average precision of estimated edges, average recall of
estimated edges and average $F_1$ score which combines the precision
and recall score. The precision, recall and $F_1$ score are
respectively defined as
\begin{equation*}
\begin{aligned}
  precision &= \frac{1}{n}\sum_{i \in [n]} 
    \frac{\sum_{a\in [p]}\sum_{b = a+1}^p 
      \ind\{(a,b) \in \hat E_i\ \wedge (a,b) \in E_i\}}
   {\sum_{a\in [p]}\sum_{b = a+1}^p
     \ind\{(a,b) \in \hat E_i\}} \\
  recall &= \frac{1}{n}\sum_{i \in [n]} 
    \frac{\sum_{a\in [p]}\sum_{b = a+1}^p 
      \ind\{(a,b) \in \hat E_i\ \wedge (a,b) \in E_i\}}
   {\sum_{a\in [p]}\sum_{b = a+1}^p
     \ind\{(a,b) \in E_i\}}\\
  F_1 & = \frac{2 * precision * recall}{precision + recall}.
\end{aligned}
\end{equation*}
Our results are averaged over 50 simulation runs. We compare our
algorithm against an oracle algorithm which exactly knows the true
partition boundaries. In this case, it is only needed to run the
algorithm of \cite{meinshausen06high} on each block of the partition
independently. We use a BIC criterion to select the tuning parameter
for this oracle procedure as described in \cite{peng09partial}.

{\bf Chain networks.} We follow the simulation in \cite{fan09network}
to generate a chain network (see Figure~\ref{fig:chain-graph}). This
network corresponds to a tridiagonal precision matrix (after an
appropriate permutation of nodes). The network is generated as
follows. First, we choose generate a random permutation $\pi$ of
$[n]$. Next, the covariance matrix is generated as follows: the
element at position $(a,b)$ is chosen as $\sigma_{ab} =
\exp(-|t_{\pi(a)} - t_{\pi(b)}|/2)$ where $t_1 < t_2 < \ldots < t_p$
and $t_i - t_{i-1} \sim {\rm Unif}(0.5, 1)$ for $i = 2, \ldots,
p$. This processes is repeated three times to obtain three different
covariance matrices, from which we sample $80$, $130$ and $90$ samples
respectively.

\begin{figure}[t]
  \centering
  \includegraphics[width=0.7\textwidth]{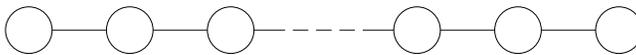}
  \caption{A chain graph}
  \label{fig:chain-graph}
\end{figure}

For illustrative purposes, Figure~\ref{fig:chain} plots the precision,
recall and $F_1$ score computed for different values of the penalty
parameters $\lambda_1$ and $\lambda_2$. Table~\ref{table:chain} shows
the precision, recall and $F_1$ score for the parameters chosen using
the BIC score described in \ref{sec:bic-score}. The numbers in
parentheses correspond to standard deviation. Due to the fact that
there is some error in estimating the partition boundaries, we observe
a decrease in performance compared to the oracle procedure that knows
the correct position of the partition boundaries.

\begin{table}[b]
  \caption{Performance on chain networks}
  \begin{tabular}{c||c|c|c}
    Method name & Precision & Recall & $F_1$ score \\
    \hline
    TD-Lasso & 0.84 (0.04) & 0.80 (0.04) & 0.82 (0.04) \\
    Oracle procedure & 0.97 (0.02) & 0.89 (0.02) & 0.93 (0.02) \\
    \hline    
  \end{tabular}
  \label{table:chain}
\end{table}

\begin{figure}[t]
  \centering
\subfigure{
    \includegraphics[width=0.45\columnwidth]{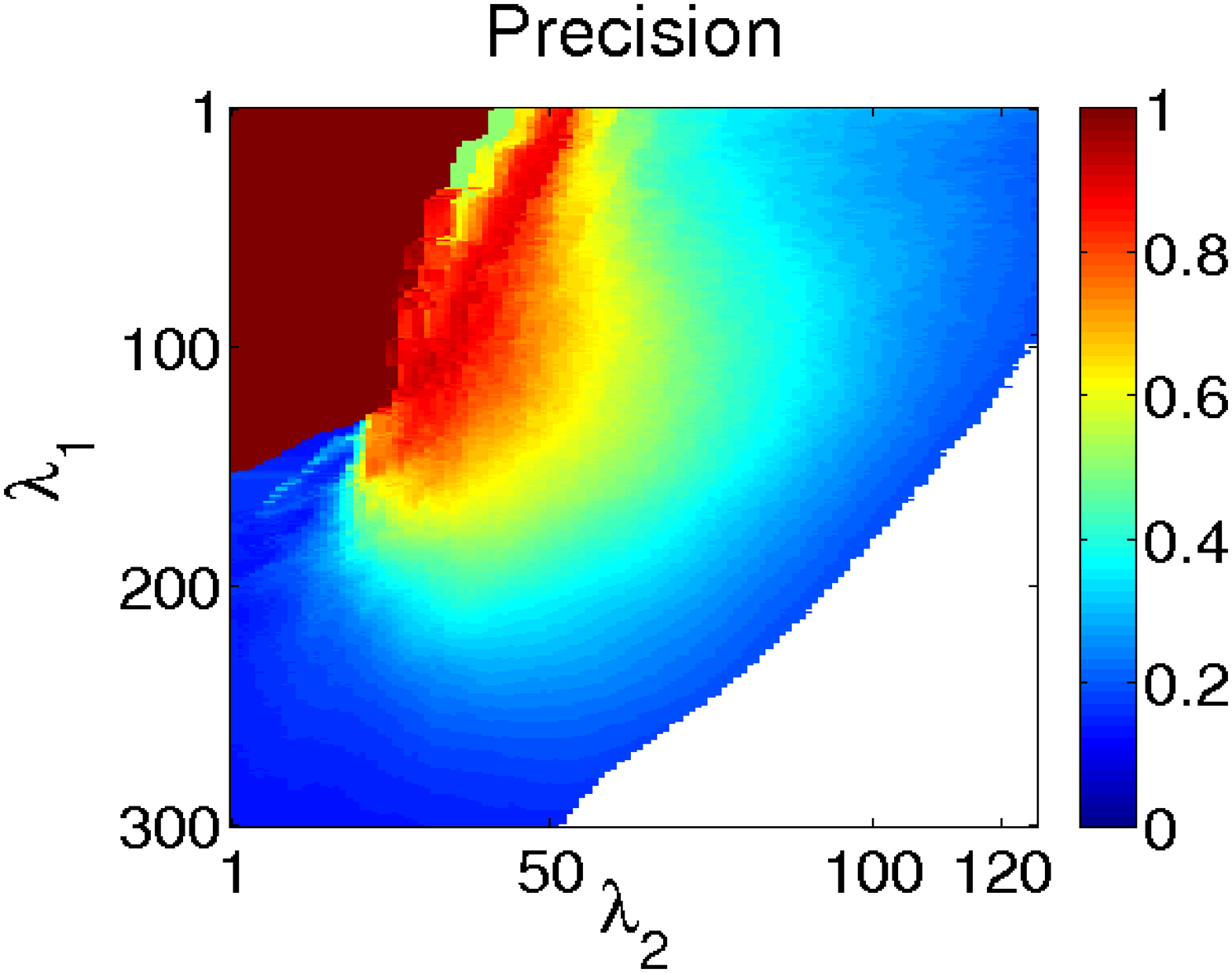}
    } 
\subfigure{
    \includegraphics[width=0.45\columnwidth]{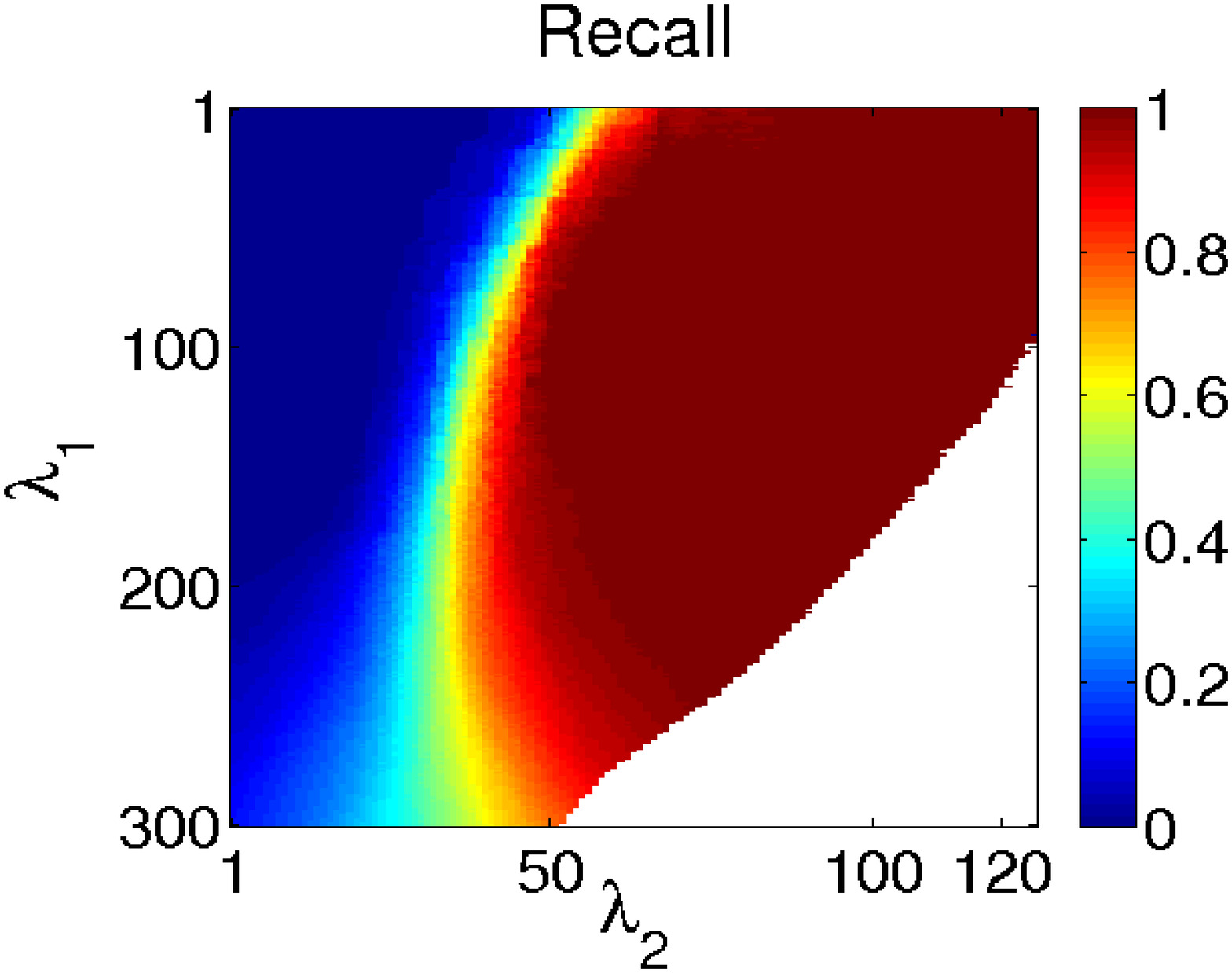}
    } 
\subfigure{
    \includegraphics[width=0.45\columnwidth]{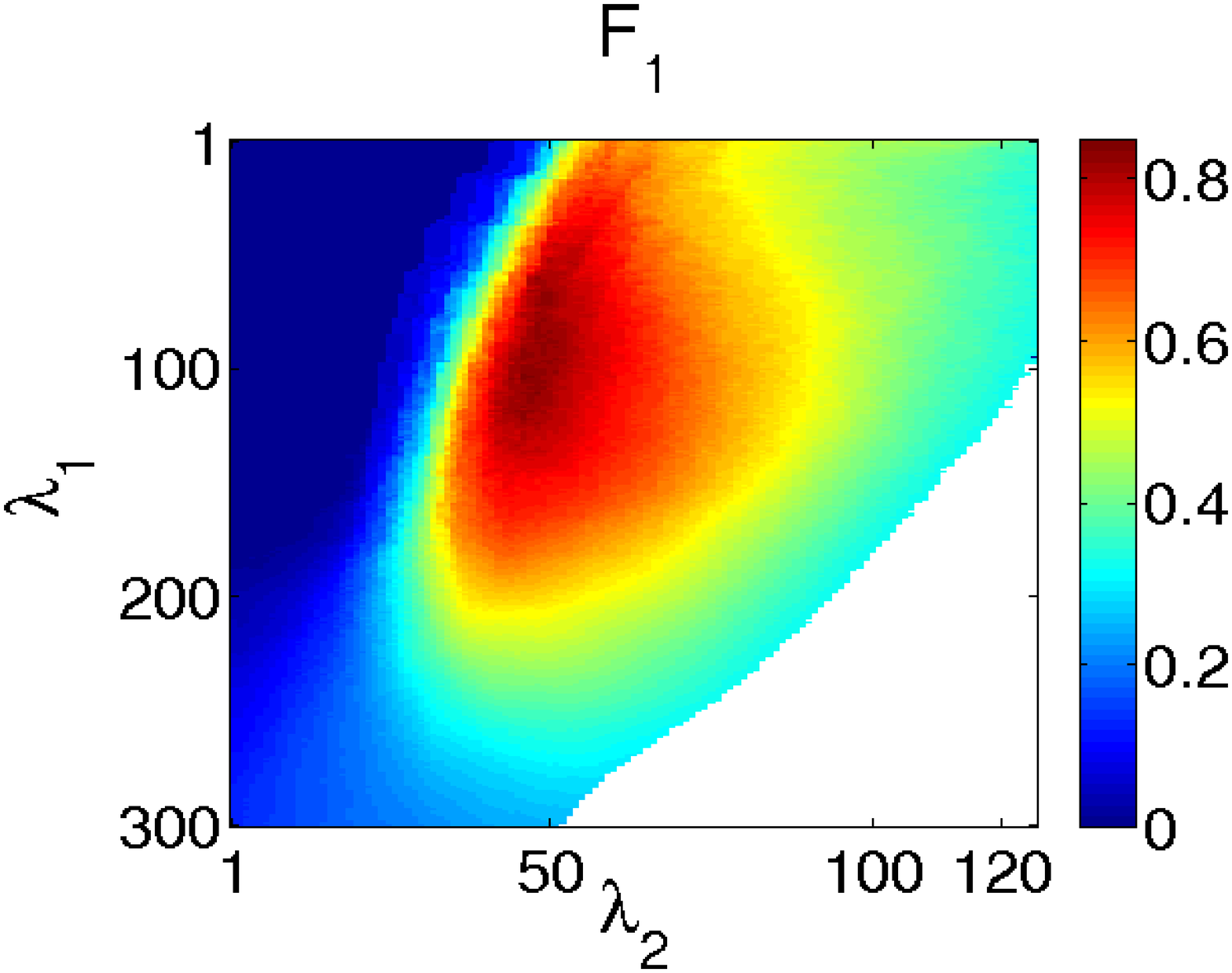}
    } 
    \caption{Plots of the precision, recall and $F_1$ scores as
      functions of the penalty parameters $\lambda_1$ and $\lambda_2$
      for chain networks. The parameter $\lambda_1$ is obtained as
      $100 * 0.98^{50 + i}$, where $i$ indexes $y$-axis. The parameter
      $\lambda_2$ is computed as $285 * 0.98^{230+j}$, where $j$
      indexes $x$-axis. The white region of each plot corresponds to a
      region of the parameter space that we did not explore.}
    \label{fig:chain}
\end{figure}

{\bf Nearest neighbors networks.} We generate nearest neighbor
networks following the procedure outlined in \cite{li06gradient}. For
each node, we draw a point uniformly at random on a unit square and
compute the pairwise distances between nodes. Each node is then
connected to 4 closest neighbors (see
Figure~\ref{fig:nn-graph}). Since some of nodes will have more than 4
adjacent edges, we remove randomly edges from nodes that have degree
larger than 4 until the maximum degree of a node in a network is
4. Each edge $(a,b)$ in this network corresponds to a non-zero element
in the precision matrix $\Omegab$, whose value is generated uniformly
on $[-1, -0.5] \cup [0.5, 1]$. The diagonal elements of the precision
matrix are set to a smallest positive number that makes the matrix
positive definite. Next, we scale the corresponding covariance matrix
$\Sigmab =\Omegab^{-1}$ to have diagonal elements equal to 1.  This
processes is repeated three times to obtain three different covariance
matrices, from which we sample $80$, $130$ and $90$ samples
respectively.

\begin{figure}[t]
  \centering
  \includegraphics[width=0.7\textwidth]{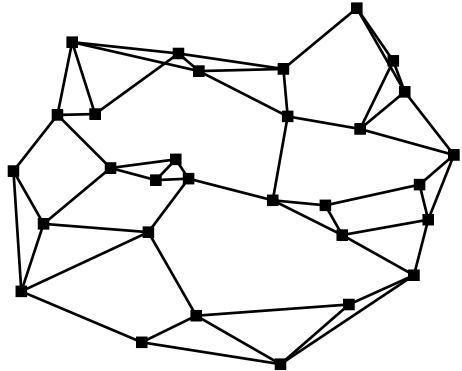}
  \caption{An instance of a random neighborhood graph with 30 nodes.}
  \label{fig:nn-graph}
\end{figure}

\begin{table}[b]
  \caption{Performance on nearest neighbor networks}
  \begin{tabular}{c||c|c|c}
    Method name & Precision & Recall & $F_1$ score \\
    \hline
    TD-Lasso & 0.79 (0.06) & 0.76 (0.05) & 0.77 (0.05) \\
    Oracle procedure & 0.87 (0.05) & 0.82 (0.05) & 0.84 (0.04) \\
    \hline    
  \end{tabular}
  \label{table:nn}
\end{table}

For illustrative purposes, Figure~\ref{fig:nn} plots the precision,
recall and $F_1$ score computed for different values of the penalty
parameters $\lambda_1$ and $\lambda_2$. Table~\ref{table:nn} shows
the precision, recall and $F_1$ score for the parameters chosen using
the BIC score, together with their standard deviations. In the same
table, we give the results of the oracle procedure.

\begin{figure}[t]
  \centering
\subfigure{
    \includegraphics[width=0.45\columnwidth]{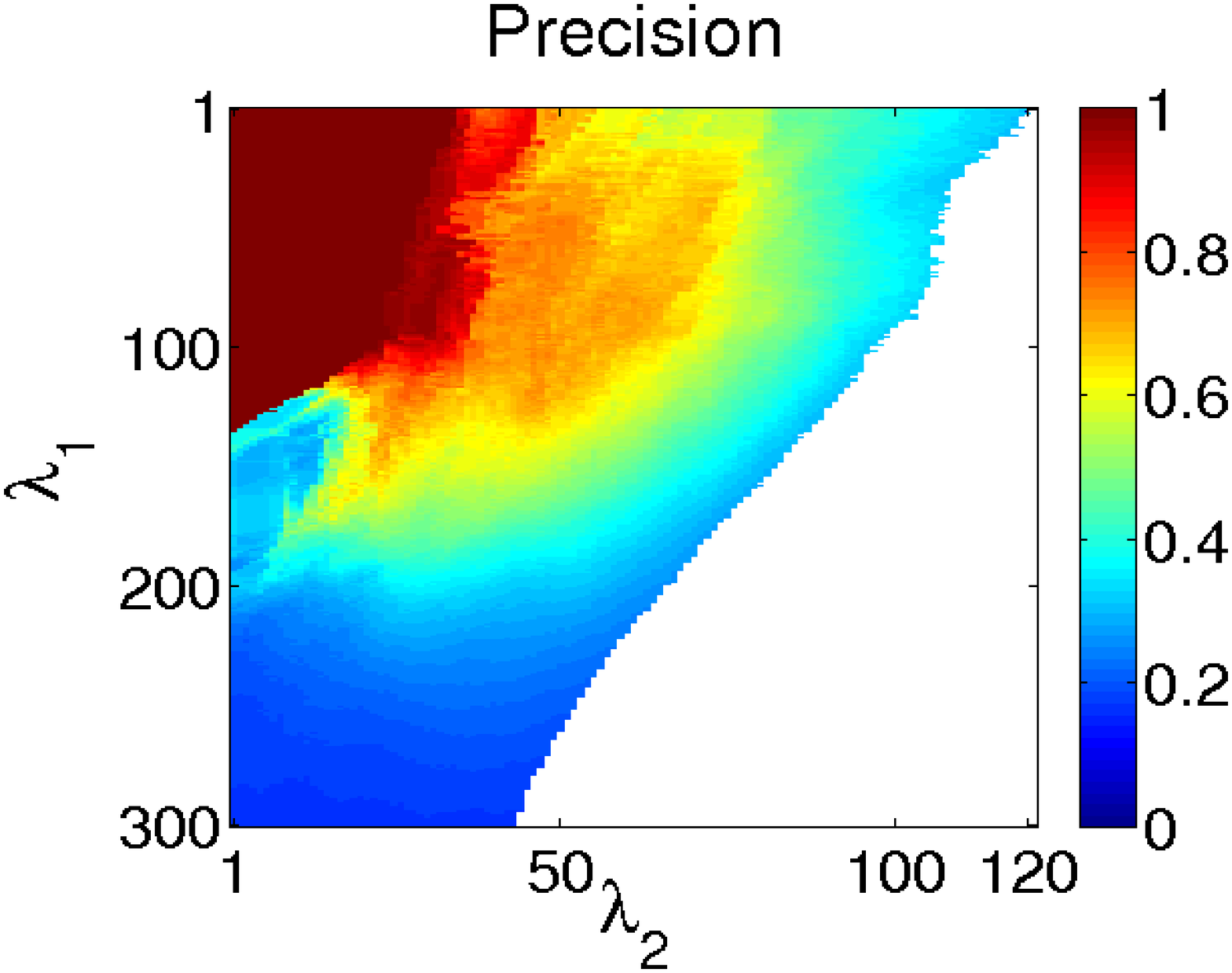}
    } 
\subfigure{
    \includegraphics[width=0.45\columnwidth]{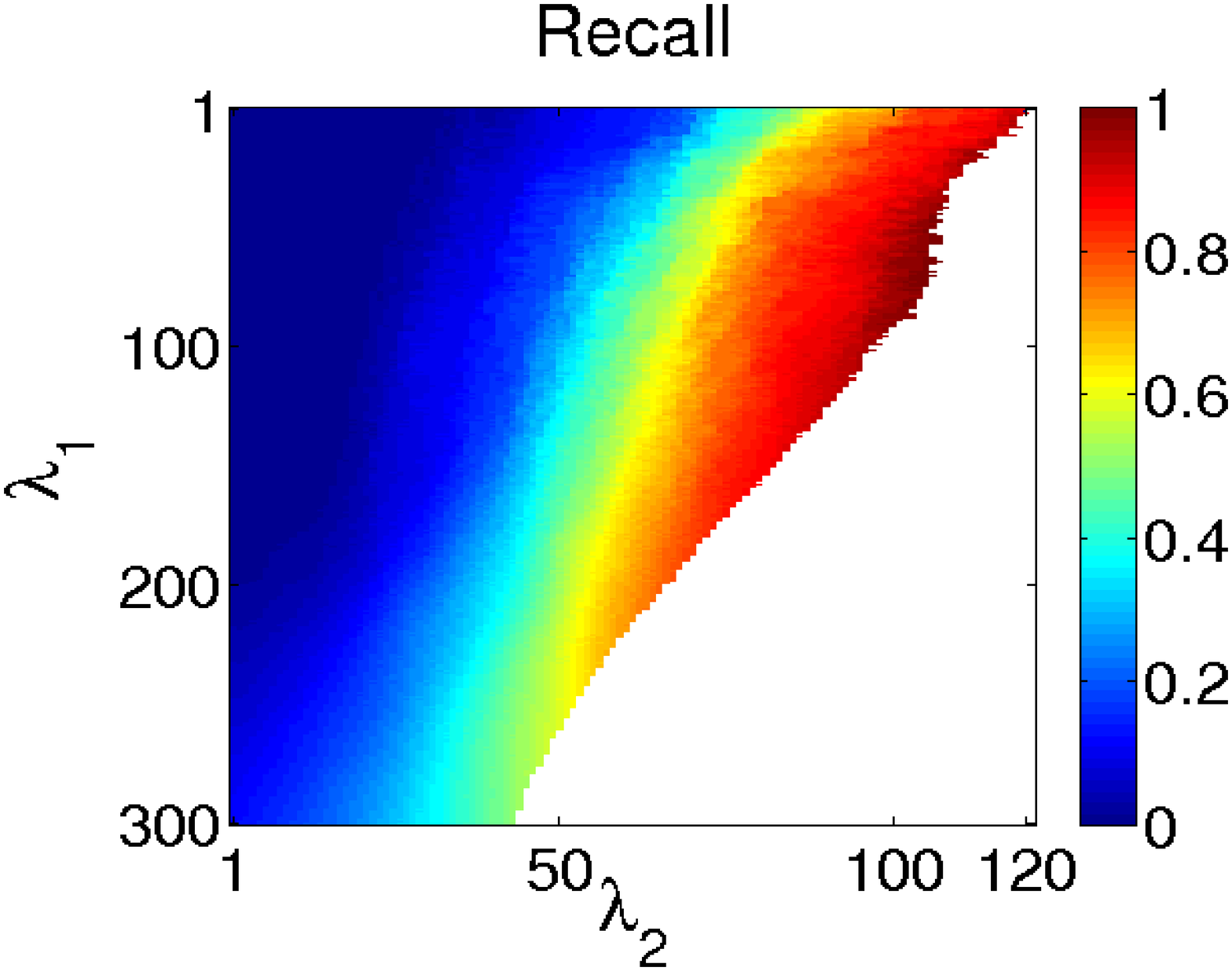}
    } 
\subfigure{
    \includegraphics[width=0.45\columnwidth]{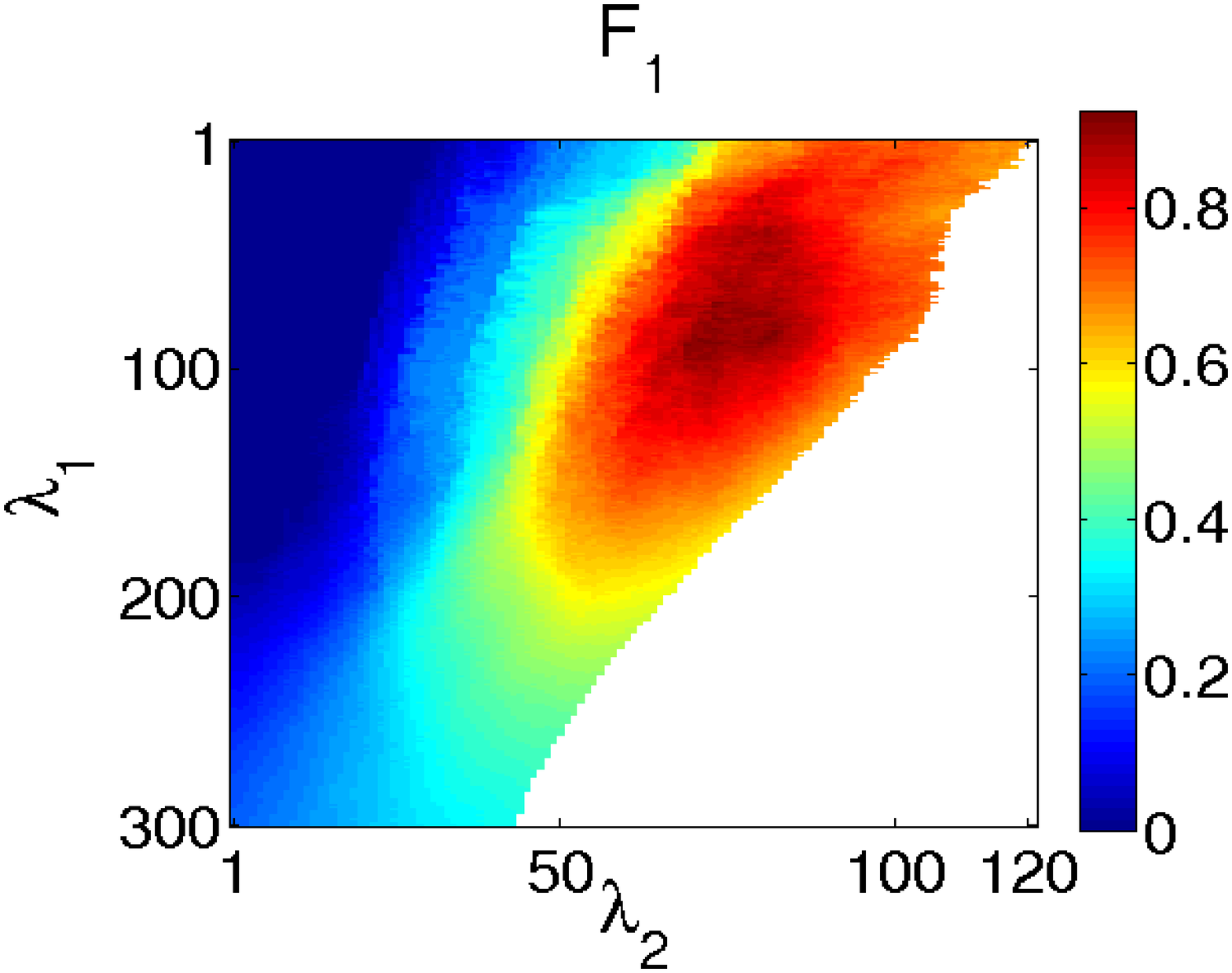}
    } 
    \caption{Plots of the precision, recall and $F_1$ scores as
      functions of the penalty parameters $\lambda_1$ and $\lambda_2$
      for nearest neighbor networks. The parameter $\lambda_1$ is
      obtained as $100 * 0.98^{50 + i}$, where $i$ indexes
      $y$-axis. The parameter $\lambda_2$ is computed as $285 *
      0.98^{230+j}$, where $j$ indexes $x$-axis. The white region of
      each plot corresponds to a region of the parameter space that we
      did not explore.}
    \label{fig:nn}
\end{figure}

\section{Conclusion} \label{sec:conclusion}

We have addressed the problem of time-varying covariance selection
when the underlying probability distribution changes abruptly at some
unknown points in time. Using a penalized neighborhood selection
approach with the fused-type penalty, we are able to consistently
estimate times when the distribution changes and the network structure
underlying the sample. The proof technique used to establish the
convergence of the boundary fractions using the fused-type penalty is
novel and constitutes an important contribution of the
paper. Furthermore, our procedure estimates the network structure
consistently whenever there is a large overlap between the estimated
blocks and the unknown true blocks of samples coming from the same
distribution. The proof technique used to establish the consistency of
the network structure builds on the proof for consistency of the
neighborhood selection procedure, however, important modifications are
necessary since the times of distribution changes are not known in
advance. Applications of the proposed approach range from cognitive
neuroscience, where the problem is to identify changing associations
between different parts of a brain when presented with different
stimuli, to system biology studies, where the task is to identify
changing patterns of interactions between genes involved in different
cellular processes.  We conjecture that our estimation procedure is
also valid in the high-dimensional setting when the number of
variables $p$ is much larger than the sample size $n$. We leave the
investigations of the rate of convergence in the high-dimensional
setting for a future work.

\section{Proofs} \label{sec:proofs}

\subsection{Proof of Lemma~\ref{lem:kkt_conditions}}

For each $i \in [n]$, introduce a $(p-1)$-dimensional vector
$\gammab_i$ defined as
\begin{equation*}
  \gammab_i = \left\{
  \begin{array}{ll}
    \betab_{\cdot, i} & \text{for } i = 1\\
    \betab_{\cdot, i} - \betab_{\cdot, i-1} & \text{otherwise}
  \end{array}
  \right.
\end{equation*}
and rewrite the objective
\eqref{eq:penalized-estimation} as 
\begin{equation}
  \label{eq:change_points:rewritten:high_dim}
  \begin{aligned}
  \{ \hat \gammab^i \}_{i \in [n]} =
  \argmin_{\gammab \in \RR^{n \times p-1}}\ \sum_{i=1}^n (x_{i,a} &-
  \sum_{b \in \bks a} x_{i,b} \sum_{j
    \leq i} \gamma_{j,b})^2 \\ &+ 2\lambda_1 \sum_{i=2}^n
  \norm{\gammab_i}_2 + 2\lambda_2 
   \sum_{i=1}^n \sum_{b \in \bks a} |\sum_{j \leq i}\gamma_{j,b}|.
 \end{aligned}
\end{equation}
A necessary and sufficient condition for $\{ \hat \gammab_i \}_{i
  \in [n]}$ to be a solution of
\eqref{eq:change_points:rewritten:high_dim}, is that for each $k \in
[n]$ the $(p-1)$-dimensional zero vector, $\0b$, belongs to the
subdifferential of \eqref{eq:change_points:rewritten:high_dim} with
respect to $\gammab_k$ evaluated at $\{ \hat \gammab_i \}_{i \in
  [n]}$, that is,
\begin{equation}
  \label{eq:partial_derivative}
    \0b  = 2\sum_{i=k}^n (-\xb\iba) (x_{i,a} -
    \sum_{b \in \bks a} x_{i,b} \hat\beta_{b,i}^a) + 2\lambda_1 \hat 
    \zb_k + 2 \lambda_2 \sum_{i=k}^n \hat\yb_i,
\end{equation}
where  $\hat \zb_k \in \partial \norm{\cdot}_2(\hat \gammab_k)$, that is,
\begin{equation*}
  \label{eq:subgradient:high_dim}
  \tilde \zb_k = \left\{ 
    \begin{array}{ll}
      \frac{\tilde \gammab_k}{\norm{\tilde \gammab_k}_2} & \text{if }
      \tilde \gammab_k \neq 0\\
      \in \Bcal_2(0, 1) & \text{otherwise}
    \end{array}
   \right.
 \end{equation*}
 and for $k \leq i$, $\hat \yb_i \in \partial |\sum_{j \leq i} \hat
 \gammab_j|$, that is, $\yb_i = \sgn(\sum_{j \leq i} \hat \gammab_j)$
 with $\sgn(0) \in [-1, 1]$.  The Lemma now simply follows from
 \eqref{eq:partial_derivative}.

\subsection{Proof of
  Theorem~\ref{thm:consistent-estimation-boundaries:known-num-blocks}}

We build on the ideas presented in the proof of Proposition 5 in
\cite{harchaoui10multiple}. Using the union bound,
\begin{equation*}
  \PP[\max_{j \in [B]} |T_j - \hat T_j| > n\delta_n] 
    \leq \sum_{j \in [B]} \PP[|T_j - \hat T_j| > n\delta_n]
\end{equation*}
and it is enough to show that  $\PP[|T_j - \tilde T_j| > n\delta_n]
\rightarrow 0$ for all $j \in [B]$. Define the event $A_{n,j}$ as 
\begin{equation*}
  A_{n,j} := \big\{ |T_j - \hat T_j| > n\delta_n \big\}  
\end{equation*}
and the event $C_n$ as 
\begin{equation*}
  C_n := \bigg\{ \max_{j \in [B]} |\hat T_j - T_j| < \frac{\Delta_{\min}}{2} \bigg\}.
\end{equation*}
We show that $\PP[A_{n,j}] \rightarrow 0$ by showing that both
$\PP[A_{n,j} \cap C_n] \rightarrow 0$ and $\PP[A_{n,j} \cap C_n^c]
\rightarrow 0$ as $n \rightarrow \infty$. The idea here is that, in
some sense, the event $C_n$ is a good event on which the estimated
boundary partitions and the true boundary partitions are not too far
from each other. Considering the two cases will make the analysis
simpler.

First, we show that $\PP[A_{n,j} \cap C_n] \rightarrow 0$. Without
loss of generality, we assume that $\hat T_j < T_j$, since the other
case follows using the same reasoning. Using \eqref{eq:kkt_conditions}
twice with $k = \hat T_j$ and with $k = T_j$ and then applying the
triangle inequality we have
\begin{equation} \label{thm:1:kkt-bound}
    2\lambda_1 \geq 
    \bignorm{
      \sum_{i=\hat T_j}^{T_{j}-1} \xb\iba 
       \dotp{\xb\iba}{\hat \betab_{\cdot, i} - \betab_{\cdot, i}}
    - \sum_{i=\hat T_j}^{\hat T_j - 1} \xb\iba \epsilon_i 
    + \lambda_2 \sum_{i=\hat T_{j}}^{T_{j}-1} \hat \yb_i}_2.
\end{equation}
Some algebra on the above display gives
\begin{equation*}
  \label{eq:event:c_n,k}
  \begin{aligned}
     2\lambda_1 + (T_j - \hat T_j)\sqrt{p} \lambda_2
      &\geq \bignorm{
        \sum_{i=\hat T_j}^{T_j - 1} \xb\iba
        \langle\xb\iba, \thetab^{j} - \thetab^{j+1}\rangle
      }_2 \\
      &\quad -
        \bignorm{ \sum_{i = \hat T_j}^{T_j - 1} \xb\iba
          \langle \xb\iba, \thetab^{j+1} - \hat \thetab^{j+1} \rangle
      }_2 -
      \bignorm{
        \sum_{i=\hat T_j}^{T_j - 1} \xb\iba \epsilon_i
      }_2 \\ 
      & =: \norm{R_1}_2 - \norm{R_2}_2 - \norm{R_3}_2.
    \end{aligned}
\end{equation*}
The above display occurs with probability one, so that the event $\{
2\lambda_1 + (T_j - \hat T_j)\sqrt{p}\lambda_2 \geq \frac{1}{3} \norm{
  R_1 }_2 \} \cup \{ \norm{R_2}_2 \geq \frac{1}{3} \norm{R_1}_2 \}
\cup \{ \norm{R_3}_2 \geq \frac{1}{3} \norm{R_1}_2 \}$ also occurs
with probability one, which gives us the following bound
\begin{equation*}
  \begin{aligned}
    \PP[A_{n,j} \cap C_n] 
    & \leq \PP[A_{n,j} \cap C_n \cap 
               \{ 2\lambda_1 +  (T_j - \hat T_j)\sqrt{p} \lambda_2
                \geq \frac{1}{3} \norm{ R_1 }_2 \}] \\
    & \quad + \PP[A_{n,j} \cap C_n \cap 
               \{ \norm{R_2}_2 \geq \frac{1}{3} \norm{R_1}_2 \}] \\
    & \quad + \PP[A_{n,j} \cap C_n \cap 
               \{ \norm{R_3}_2 \geq \frac{1}{3} \norm{R_1}_2 \}] \\
    & =: \PP[A_{n,j,1}] + \PP[A_{n,j,2}] +  \PP[A_{n,j,3}].
  \end{aligned}
\end{equation*}
First, we focus on the event $A_{n,j,1}$. Using
lemma~\ref{lem:concentration-eigenvalue:supremum-over-blocks}, we can
upper bound $\PP[A_{n, j, 1}]$ with
\begin{equation*}
  \PP[2\lambda_1 + (T_j - \hat T_j)\sqrt{p}\lambda_2 \geq
  \frac{\phi_{\min}}{27} (T_j - \hat T_j) \xi_{\min}]  
  + 2\exp(-n\delta_n/2 + 2 \log n).
\end{equation*}
Since under the assumptions of the theorem
$(n\delta_n\xi_{\min})^{-1}\lambda_1 \rightarrow 0$ and
$\xi_{\min}^{-1}\sqrt{p}\lambda_2 \rightarrow 0$ as $n \rightarrow
\infty$, we have that $\PP[A_{n,j,1}] \rightarrow 0$ as $n \rightarrow
\infty$.

%%%%%% A_{n,j,2} %%%%%%%%%

Next, we show that the probability of the event $A_{n,j,2}$ converges
to zero. Let $\bar T_j := \lfloor2^{-1}(T_j + T_{j+1})
\rfloor$. Observe that on the event $C_n$, $\hat T_{j+1} > \bar T_j$
so that $\hat \betab_{\cdot,i} = \hat \thetab^{j+1}$ for all $i \in
[T_j, \bar T_j]$.  Using~\eqref{eq:kkt_conditions} with $k = T_j$ and
$k = \bar T_j$ we have that
\begin{equation*}
  2\lambda_1 + (\bar T_j - T_j) \sqrt{p}\lambda_2 
  \geq
  \bignorm{ \sum_{i= T_j}^{\bar T_j-1}
    \xb\iba \dotp{\xb\iba}{\thetab^{j+1} - \hat \thetab^{j+1}}
  }_2 - 
  \bignorm{\sum_{i= T_j}^{\bar T_j-1} \xb\iba \epsilon_i}_2.
\end{equation*}
Using lemma~\ref{lem:concentration-eigenvalue:supremum-over-blocks} on
the display above we have
\begin{equation}
  \label{eq:bound:anj2}  
  \norm{\thetab^{j+1} - \hat \thetab^{j+1}}_2 \leq
  \frac{36\lambda_1 + 18(\bar T_j - T_j) \sqrt{p}\lambda_2 +
    18\bignorm{\sum_{i = T_j}^{\bar T_j-1} \xb\iba \epsilon_i}_2}
  {(T_{j+1} - T_j)\phi_{\min}},
\end{equation}
which holds with probability at least $1 - 2\exp(-\Delta_{\min}/4 + 2 \log
n)$. We will use the above bound to deal with the event $\{
\norm{R_2}_2 \geq \frac{1}{3} \norm{R_1}_2 \}$. Using
lemma~\ref{lem:concentration-eigenvalue:supremum-over-blocks}, we have
that $\phi_{\min}(T_j - \hat T_j)\xi_{\min}/9 \leq \norm{R_1}_2$ and
$\norm{R_2}_2 \leq (T_j - \hat T_j)9\phi_{\max} \norm{\thetab^{j+1}
  - \hat \thetab^{j+1}}_2$ with probability at least $1 -
4\exp(-n\delta_n/2 + 2 \log n)$. Combining with~\eqref{eq:bound:anj2},
the probability $\PP[A_{n,j,2}]$ is upper bounded by
\begin{equation*}
\begin{aligned}
  \PP&[c_1\phi_{\min}^2\phi_{\max}^{-1}\Delta_{\min}\xi_{\min} 
       \leq \lambda_1] + 
  \PP[c_2\phi_{\min}^2\phi_{\max}^{-1}\xi_{\min} 
       \leq \sqrt{p} \lambda_2] + \\
  & \PP[c_3\phi_{\min}^2\phi_{\max}^{-1}\xi_{\min} 
       \leq (\bar T_j - T_j)^{-1}\norm{\sum_{i = T_j}^{\bar T_j-1} \xb\iba
           \epsilon_i}_2] +
       c_4\exp(-n\delta_n/2 + 2 \log n).
\end{aligned}
\end{equation*}
Under the conditions of the theorem, the first term above converges to
zero, since $\Delta_{\min} > n\delta_n$ and $(n\delta_n
\xi_{\min})^{-1} \lambda_1 \rightarrow 0$.  The second term also
converges to zero, since $\xi_{\min}^{-1}\sqrt{p}\lambda_2 \rightarrow
0$. Using lemma~\ref{lem:concentration-error-term}, the third term
converges to zero with the rate $\exp(-c_6 \log n)$,
since $(\xi_{\min} \sqrt{ \Delta_{\min} })^{-1} \sqrt{p\log n}
\rightarrow 0$. Combining all the bounds, we have that $\PP[A_{n,j,2}]
\rightarrow 0$ as $n \rightarrow \infty$.

%%%%% A_{n,j,3} %%%%%

Finally, we upper bound the probability of the event $A_{n,j,3}$.  As
before, $\phi_{\min}(T_j - \hat T_j)\xi_{\min}/9 \leq \norm{R_1}_2$
with probability at least $1 - 2\exp(-n\delta_n/2 + 2\log n)$. This
gives us an upper bound on $\PP[A_{n,j,3}]$ as
\begin{equation*}
  \PP\bigg[\frac{\phi_{\min} \xi_{\min}}{27} \leq 
    \frac{\norm{\sum_{i = \hat T_j}^{T_j-1} \xb\iba \epsilon_i}_2}
         {T_j - \hat T_j}
    \bigg] + 2\exp(-n\delta_n/2 + 2\log n),
\end{equation*}
which, using lemma~\ref{lem:concentration-error-term}, converges to
zero as under the conditions of the theorem $(\xi_{\min}
\sqrt{n\delta_n})^{-1} \sqrt{p\log n} \rightarrow 0$. Thus we have
shown that $\PP[A_{n,j,3}] \rightarrow 0$. Since the case when $\hat
T_j > T_j$ is shown similarly, we have proved that $\PP[A_{n,j} \cap
C_n] \rightarrow 0$ as $n \rightarrow \infty$.

%%%%%% Step 2 %%%%%%%%%%

We proceed to show that $\PP[A_{n,j} \cap C_n^c] \rightarrow 0$
as $n \rightarrow \infty$. Recall that $ C_n^c = \{ \max_{j \in [B]}
|\hat T_j - T_j| \geq \Delta_{\min} / 2 \}$. Define the following
events
\begin{align*}
  D_n^{(l)} &:= 
     \left\{ \exists j \in [B],\ \hat T_j \leq T_{j-1} \right\} 
     \cap  C_n^c, \\
  D_n^{(m)} &:= 
     \left\{ \forall j \in [B],\ T_{j-1} < \hat T_j < T_{j+1} \right\} 
     \cap  C_n^c, \\
  D_n^{(r)} &:= 
     \left\{ \exists j \in [B],\ \hat T_j \geq T_{j+1} \right\} 
     \cap  C_n^c \\
\end{align*}
and write $\PP[A_{n,j} \cap C_n^c] = \PP[A_{n,j} \cap D_n^{(l)}] +
\PP[A_{n,j} \cap D_n^{(m)}] + \PP[A_{n,j} \cap D_n^{(r)}]$. First,
consider the event $A_{n,j} \cap D_n^{(m)}$ under the assumption that
$\hat T_j \leq T_j$. Due to symmetry, the other case will follow in a
similar way. Observe that
\begin{equation}
  \label{eq:step2:middle}
  \begin{aligned}
    &\PP[A_{n,j} \cap D_n^{(m)}] \\
      & \leq \PP[A_{n,j} \cap 
          \{ (\hat T_{j+1} - T_j) \geq \frac{\Delta_{\min}}{2} \} \cap
          D_n^{(m)}] \\
      & \quad +
      \PP[\{ (T_{j+1} - \hat T_{j+1}) \geq \frac{\Delta_{\min}}{2} \} \cap
          D_n^{(m)}] \\
      & \leq \PP[A_{n,j} \cap 
          \{ (\hat T_{j+1} - T_j) \geq \frac{\Delta_{\min}}{2} \} \cap
          D_n^{(m)}] \\
      & \quad + \sum_{k = j + 1}^{B - 1}       
         \PP[\{ (T_{k} - \hat T_{k}) \geq \frac{\Delta_{\min}}{2} \} \cap
             \{ (\hat T_{k+1} - T_{k}) \geq \frac{\Delta_{\min}}{2} \}
         \cap  D_n^{(m)}]. 
  \end{aligned}
\end{equation}
We bound the first term in \eqref{eq:step2:middle} and note that the
other terms can be bounded in the same way. The following analysis is
performed on the event $A_{n,j} \cap \{ (\hat T_{j+1} - T_j) \geq
\Delta_{\min}/2 \} \cap D_n^{(m)}$.  Using
\eqref{eq:kkt_conditions} with $k = \hat T_j$ and $k = T_j$, after
some algebra (similar to the derivation of~\eqref{thm:1:kkt-bound}) the
following holds
\begin{equation*}
  \norm{\thetab^{j} - \hat \thetab^{j+1}}_2 \leq
  \frac{18\lambda_1 + 9(T_j - \hat T_j)\sqrt{p}\lambda_2 + 
    9\norm{\sum_{i =\hat T_j}^{T_j - 1} \xb\iba \epsilon_i}} 
    {\phi_{\min}(T_j - \hat T_j)},
\end{equation*}
with probability at least $1 - 2\exp(-n\delta_n/2 + 2 \log n)$, where
we have used
lemma~\ref{lem:concentration-eigenvalue:supremum-over-blocks}. Let
$\bar T_j = \lfloor 2^{-1}(T_j + T_{j+1})\rfloor$.
Using~\eqref{eq:kkt_conditions} with $k = \bar T_{j}$ and $k = T_j$
after some algebra (similar to the derivation
of~\eqref{eq:bound:anj2}) we obtain the following bound
\begin{equation*}
\begin{aligned}
  \norm{\thetab^{j} - \thetab^{j+1}}_2 \leq &
   \frac{18\lambda_1 + 9(\bar T_j - T_j)\sqrt{p}\lambda_2 +
     9\norm{\sum_{i = T_j}^{\bar T_j - 1} \xb\iba \epsilon_i}_2}
    { \phi_{\min}(\bar T_j - T_j) } \\ &+
   81 \phi_{\max}\phi_{\min}^{-1} 
     \norm{\thetab^{j} - \hat \thetab^{j+1}}_2,
\end{aligned}
\end{equation*}
which holds with probability at least $1 - c_1\exp(- n\delta_n / 2 + 2
\log n)$, where we have used lemma
\ref{lem:concentration-eigenvalue:supremum-over-blocks} twice.
Combining the last two displays, we can upper bound the first term in
\eqref{eq:step2:middle} with
\begin{equation*}
\begin{aligned}
  \PP&[\xi_{\min}n\delta_n \leq c_1 \lambda_1] +
  \PP[\xi_{\min} \leq c_2 \sqrt{p}\lambda_2] \\
  & + \PP[\xi_{\min}\sqrt{n\delta_n} \leq c_3 \sqrt{p\log n}]
  + c_4\exp(- c_5 \log n),
\end{aligned}
\end{equation*}
where we have used lemma~\ref{lem:concentration-error-term} to obtain
the third term. Under the conditions of the theorem, all terms
converge to zero. Reasoning similar about the other terms in
\eqref{eq:step2:middle}, we can conclude that $\PP[A_{n,j} \cap
D_n^{(m)}] \rightarrow 0$ as $n \rightarrow \infty$.

Next, we bound the probability of the event $A_{n,j} \cap D_n^{(l)}$,
which is upper bounded by 
\begin{equation*}
  \PP[D_n^{(l)}] \leq \sum_{j = 1}^{B}
  2^{j-1} \PP[\max \{l \in [B] : \hat T_l \leq T_{l-1}\} = j].
\end{equation*}
Observe that 
\begin{equation*}  
  \begin{aligned}
    \{\max & \{l \in [B] : \hat T_l \leq T_{l-1}\} = j \} \\
       & \subseteq \bigcup_{l = j}^B
             \{ T_j - \hat T_j \geq \frac{\Delta_{\min}}{2}\}    \cap 
           \{\hat T_{j+1} - T_j \geq \frac{\Delta_{\min}}{2}\}
           \\
  \end{aligned}
\end{equation*}
so that we have
\begin{equation*}
  \label{eq:step2:bound:left_term}
  \begin{aligned}
    \PP[D_n^{(l)}] & 
     \leq 2^{B - 1} \sum_{j=1}^{B-1} \sum_{l > j} 
     \PP[\{ T_l - \hat T_l \geq \frac{\Delta_{\min}}{2}\} 
         \cap 
         \{\hat T_{l+1} - T_l \geq \frac{\Delta_{\min}}{2}\}].
  \end{aligned}
\end{equation*}
Using the same arguments as those used to bound terms in
\eqref{eq:step2:middle}, we have that $\PP[D_n^{(l)}] \rightarrow 0$
as $n \rightarrow \infty$ under the conditions of the theorem.
Similarly, we can show that the term $\PP[D_n^{(r)}] \rightarrow 0$ as
$n \rightarrow \infty$. Thus, we have shown that $\PP[A_{n,j} \cap
C_n^c] \rightarrow 0$, which concludes the proof.

\subsection{Proof of Lemma~\ref{lem:unique_pattern}}

Consider $\hat \Tcal$ fixed. The lemma is a simple consequence of the
duality theory, which states that given the subdifferential $\hat
\yb_i$ (which is constant for all $i \in \hat \Bcal^j$, $\hat \Bcal^j$
being an estimated block of the partition $\hat \Tcal$), all solutions $\{\check
\betab_{\cdot, i}\}_{i \in [n]}$ of \eqref{eq:penalized-estimation}
need to satisfy the complementary slackness condition $\sum_{b \in
  \bks a} \hat y_{i,b} \check \beta_{b, i} = \norm{\check \betab_{\cdot,
    i}}_1$, which holds only if $\check \beta_{b, i} = 0$ for all $b
\in \bks a$ for which $| \hat y_{i,b} | < 1$. 

\subsection{Proof of Theorem~\ref{thm:correct-neighborhood}}

Since the assumptions of theorem
\ref{thm:consistent-estimation-boundaries:known-num-blocks} are
satisfied, we are going to work on the event 
\begin{equation*}
  \Ecal := \{ \max_{j \in [B]}
  |\hat T_j - T_j| \leq n\delta_n \}.  
\end{equation*}
In this case, $|\hat \Bcal^k| = \Ocal(n)$. For $i \in \hat \Bcal^k$,
we write
\begin{equation}
\label{eq:model:proof-neighborhood}
  x_{i,a} = \sum_{b \in S^j} x_{i,b} \theta_b^k +  e_i + \epsilon_i
\end{equation}
where $e_i = \sum_{b \in S} x_{i,b}(\beta_{b, i} - \theta_b^k)$
is the bias. Observe that $\forall i \in \hat \Bcal^k \cap \Bcal^k$,
the bias $ e_i = 0$, while for $i \not\in \hat \Bcal^k \cap
\Bcal^k$, the bias $e_i$ is normally distributed with variance bounded
by $M^2\phi_{\max}$ under the assumption {\bf A1} and {\bf A3}.

We proceed to show that $S(\hat \thetab^k) \subset S^k$. Since $\hat
\thetab^k$ is an optimal solution of~\eqref{eq:penalized-estimation},
it needs to satisfy
\begin{equation}
  \label{eq:kkt-known-boundaries}
\begin{aligned}
  (\Xb\ba\supb{k})'\Xb\ba\supb{k}  (\hat \thetab^k - 
  \thetab^k) &- (\Xb\ba\supb{k})'( \eb\supb{k} + \epsilonb\supb{k})
  \\ &
  + \lambda_1(\hat \zb_{\hat T_{k-1}} - \hat \zb_{\hat T_k}) + 
  \lambda_2 |\hat \Bcal^k| \hat \yb_{\hat T_{k-1}} = 0.
\end{aligned}
\end{equation}
Now, we will construct the vectors $\check \thetab^k, \check \zb_{\hat
  T_{k-1}}, \check \zb_{\hat T_k}$ and $\check \yb_{\hat T_{k-1}}$
that satisfy \eqref{eq:kkt-known-boundaries} and verify that the
subdifferential vectors are dual feasible. Consider the following
restricted optimization problem
\begin{equation}
  \label{eq:objective_known_boundaries:restricted}
\begin{aligned}
  \min_{\thetab^{1}, \ldots, \thetab^{\hat B};
    \ \thetab_{N^k}^k = \0b }\
  & \sum_{j \in [\hat B]} 
     \norm{\Xb_a\supb{j} - \Xb\ba\supb{j}\thetab^{j}}_2^2 \\
  & + 2\lambda_1 
      \sum_{j = 2}^{\hat B} \norm{\thetab^{j} - \thetab^{j-1}}_2
  + 2\lambda_2 \sum_{j=1}^{\hat B} |\hat \Bcal^j|\norm{\thetab^j}_1,
\end{aligned}
\end{equation}
where the vector $\thetab_{N^k}^k$ is constrained to be $\0b$. Let
$\{\check \thetab^j\}_{j \in [\hat B]}$ be a solution to the
restricted optimization problem
\eqref{eq:objective_known_boundaries:restricted}. Set the subgradient
vectors as $\check \zb_{\hat T_{k-1}} \in \partial \norm{\check
  \thetab^k - \check \thetab^{k-1}}$, $\check \zb_{T_k} \in \partial
\norm{\check \thetab^{k+1} - \check \thetab^{k}}$ and $\check
\yb_{\hat T_{k-1}, S^k} = \sgn(\check \thetab_{S^k}^k)$. Solve
\eqref{eq:kkt-known-boundaries} for $\check \yb_{\hat T_{k-1},N^k}$. By
construction, the vectors $\check \thetab^k, \check \zb_{\hat
  T_{k-1}}, \check \zb_{\hat T_k}$ and $\check \yb_{\hat T_{k-1}}$
satisfy \eqref{eq:kkt-known-boundaries}. Furthermore, the vectors
$\check \zb_{\hat T_{k-1}}$ and $\check \zb_{\hat T_k}$ are elements
of the subdifferential, and hence dual feasible. To show that $\check
\thetab^k$ is also a solution to
\eqref{eq:objective_known_boundaries}, we need to show that
$\norm{\check \yb_{\hat T_{k-1}, N^k}}_{\infty} \leq 1$, that is, that
$\check \yb^{\hat T_{k-1}}$ is also dual feasible variable. Using
lemma~\ref{lem:unique_pattern}, if we show that $\check \yb_{\hat
  T_{k-1}, N^k}$ is strict dual feasible, $\norm{\check \yb_{\hat
    T_{k-1},N^k}}_{\infty} < 1$, then any other solution $\hat{\check
  \thetab}^k$ to \eqref{eq:objective_known_boundaries} will satisfy
$\hat{\check \thetab}_N^k = \0b$.

From~\eqref{eq:kkt-known-boundaries} we can obtain an explicit formula
for $\check \thetab_{S^k}$
\begin{equation}
  \label{eq:explict-solution}
  \begin{aligned}
  \check \thetab_{S^k}^k =  \thetab_{S^k}^k &+ 
    \rbr{(\Xb_{S^k}\supb{k})'\Xb\sk\supb{k}}^{-1} 
    (\Xb\sk\supb{})'(\eb\supb{k} + \epsilonb\supb{k}) \\&-
     \rbr{(\Xb\sk\supb{k})'\Xb\sk\supb{k}}^{-1} \rbr{
    \lambda_1(\check \zb_{\hat T_{k-1},S^k} - \check \zb_{\hat T_k, S^k}) +
    \lambda_2 |\hat \Bcal^k| \check \yb_{\hat T_{k-1},S^k}}.
  \end{aligned}
\end{equation}
Recall that for large enough $n$ we have that $|\hat \Bcal| > p$, so
that the matrix $(\Xb\sk\supb{k})'\Xb\sk\supb{k}$ is invertible with
probability one. Plugging \eqref{eq:explict-solution} into
\eqref{eq:kkt-known-boundaries}, we have that $\norm{\check \yb_{\hat
    T_{k-1},N^k}}_{\infty} < 1$ if $\max_{b \in N^k} |Y_b| < 1$, where
$Y_b$ is defined to be
\begin{equation}
  \label{eq:sparse_pattern_variable}
\begin{aligned}
Y_b :=  \rbr{\Xb_b\supb{k}}'\bigg[\Xb\sk\supb{k} 
       &\rbr{(\Xb\sk\supb{k})'\Xb\sk\supb{k}}^{-1}
       \Big(\check \yb_{\hat T_{k-1},S^k} + 
         \frac{\lambda_1(\hat \zb_{\hat T_{k-1},S^k} 
           - \hat \zb_{\hat T_k, S^k})}{|\hat \Bcal^k|\lambda_2}\Big)
       \\&+ \Hb\sk^{\hat \Bcal^k, \perp}\Big(
         \frac{\eb\supb{k} + \epsilonb\supb{k}}
              {|\hat \Bcal^k|\lambda_2}\Big)
      \bigg]
      - \frac{\lambda_1(\check z_{\hat T_{k-1},b} 
        - \check z_{\hat T_k, b})}{|\hat \Bcal^k|\lambda_2},
\end{aligned}
\end{equation}
where $\Hb\sk^{\hat \Bcal^k, \perp}$ is the projection matrix
\begin{equation*}
  \Hb\sk^{\hat \Bcal^k, \perp} = \Ib - 
    \Xb\sk\supb{k}\rbr{(\Xb\sk\supb{k})'\Xb\sk\supb{k}}^{-1}
    \rbr{\Xb\sk\supb{k}}'.
\end{equation*}
Let $\tilde \Sigmab^k$ and $\hat{\tilde \Sigmab}^k$ be defined as 
\begin{equation*}
  \tilde \Sigmab^k = \frac{1}{|\hat \Bcal^k|} 
  \sum_{i \in \hat \Bcal^k} 
  \EE[ \xb\ba^i (\xb\ba^i)' ] \quad \text{and} \quad
  \hat{\tilde \Sigmab}^k = \frac{1}{|\hat \Bcal^k|} 
  \sum_{i \in \hat \Bcal^k} 
   \xb\ba^i (\xb\ba^i)'.  
\end{equation*}
For $i \in [n]$, we let $\Bcal(i)$ index the block to which the sample
$i$ belongs to.  Now, for any $b \in N^k$, we can write $x_b^i =
\Sigmab_{bS^k}^{\Bcal(i)} \big(\Sigmab_{S^kS^k}^{\Bcal(i)}\big)^{-1}
\xb\sk^i + w_b^i$ where $w_{b}^i$ is normally distributed with
variance $\sigma_b^2 < 1$ and independent of $\xb\sk^i$. Let $\Fb_b
\in \mathbb{R}^{|\hat\Bcal^k|}$ be the vector whose components are
equal to $\Sigmab_{bS^k}^{\Bcal(i)}
\big(\Sigmab_{S^kS^k}^{\Bcal(i)}\big)^{-1} \xb\sk^i$, $i \in
\hat\Bcal^k$, and $\Wb_b \in \RR^{|\hat\Bcal^k|}$ be the vector with
components equal to $w_b^i$. Using this notation, we write $Y_b =
T_b^1 + T_b^2 + T_b^3 + T_b^4$ where
\begin{equation}
  \label{eq:sparse_event:T1a}
  T_b^1 = \Fb_b' \Xb\sk\supb{k}
       \rbr{(\Xb\sk\supb{k})'\Xb\sk\supb{k}}^{-1}
       \Big(\check \yb_{\hat T_{k-1}} + 
         \frac{\lambda_1(\check \zb_{\hat T_{k-1},S^k} 
           - \check \zb_{\hat T_k,S^k})}{|\hat \Bcal^k|\lambda_2}\Big)
\end{equation}
\begin{equation}
  \label{eq:sparse_event:T1b}
  T_b^2 = \Fb_b' \Hb\sk^{\hat \Bcal^k, \perp}\Big(
         \frac{\eb\supb{k} + \epsilonb\supb{k}}
              {|\hat \Bcal^k|\lambda_2}\Big)
\end{equation}
\begin{equation}
  \label{eq:sparse_event:T2}
\begin{aligned}
  T_b^3 = \rbr{\tilde \Wb_b}' \bigg[ \Xb\sk\supb{k} &
       \rbr{(\Xb\sk\supb{k})'\Xb\sk\supb{k}}^{-1}
       \Big(\check \yb_{\hat T_{k-1}} + 
         \frac{\lambda_1(\check \zb_{\hat T_{k-1},S^k} 
           - \check \zb_{\hat T_k,S^k})}{|\hat \Bcal^k|\lambda_2}\Big)
       \\&+ \Hb\sk^{\hat \Bcal^k, \perp}\Big(
         \frac{\eb\supb{k} + \epsilonb\supb{k}}
              {|\hat \Bcal^k|\lambda_2}\Big)
      \bigg]
\end{aligned}
\end{equation}
and
\begin{equation}
  \label{eq:sparse_event:T3}
  T_b^4 =  - \frac{\lambda_1(\check z_{\hat T_{k-1},b} 
        - \check z_{\hat T_k,b})}{|\hat \Bcal^k|\lambda_2}.
\end{equation}
We analyze each of the terms separately. Starting with the term
$T_b^1$, after some algebra, we obtain that
\begin{equation}
  \label{eq:tb1:algebra}
\begin{aligned}
  \Fb_b' &\Xb\sk\supb{k}
       \rbr{(\Xb\sk\supb{k})'\Xb\sk\supb{k}}^{-1} \\
  &= 
  \sum_{j\ :\ \hat  \Bcal^k \cap \Bcal^j \neq \emptyset}
  \frac{|\Bcal^j \cap \hat\Bcal^k|}{|\hat \Bcal^k|}
  \Sigmab_{bS^k}^j (\Sigmab_{S^kS^k}^j)^{-1}
    (\hat\Sigmab_{S^kS^k}^{\Bcal^j \cap \hat\Bcal^k} - 
    \Sigmab_{S^kS^k}^j)
    \rbr{{\hat{\tilde\Sigmab}}_{S^kS^k}^{k}}^{-1}\\
  &\quad +
  \tilde\Sigmab_{bS^k}^k(({\hat{\tilde\Sigmab}}_{S^kS^k}^k)^{-1} - 
       ({\tilde\Sigmab_{S^kS^k}}^k)^{-1}) \\
  &\quad +
  \tilde\Sigmab_{bS^k}^k{(\tilde\Sigmab_{S^kS^k}^k)}^{-1}.
\end{aligned}
\end{equation}
Recall that we are working on the event $\Ecal$, so that
$\opnorm{\tilde\Sigmab_{N^kS^k}^k{(\tilde\Sigmab_{S^kS^k}^k)}^{-1}}{\infty}
\xrightarrow{n \rightarrow \infty}
\opnorm{\Sigmab_{N^kS^k}^k{(\Sigmab_{S^kS^k}^k)}^{-1}}{\infty}$ and
$(|\hat \Bcal^k|\lambda_2)^{-1}\lambda_1(\check \zb_{\hat T_{k-1},S^k}
- \check \zb_{\hat T_k,S^k}) \xrightarrow{n \rightarrow \infty} 0$
element-wise. Using \eqref{eq:bound-covariance-elements} we bound the
first two terms in the equation above. We bound the first term by
observing that for any $j$ and any $b \in N^k$ and $n$ sufficiently
large
\begin{equation*}
\begin{aligned}
\frac{|\Bcal^j \cap \hat\Bcal^k|}{|\hat \Bcal^k|}&
\norm{
  \Sigmab_{bS^k}^j (\Sigmab_{S^kS^k}^j)^{-1}
    (\hat\Sigmab_{S^kS^k}^{\Bcal^j \cap \hat\Bcal^k} -
    \Sigmab_{S^kS^k}^j)}_{\infty}  \\ 
&\leq    
\frac{|\Bcal^j \cap \hat\Bcal^k|}{|\hat \Bcal^k|}
\norm{
  \Sigmab_{bS^k}^j (\Sigmab_{S^kS^k}^j)^{-1}}_{1}
\norm{
    \hat\Sigmab_{S^kS^k}^{\Bcal^j \cap \hat\Bcal^k} -
    \Sigmab_{S^kS^k}^j}_{\infty} \\
& \leq  C_1 
\frac{|\Bcal^j \cap \hat\Bcal^k|}{|\hat \Bcal^k|}
\norm{
    \hat\Sigmab_{S^kS^k}^{\Bcal^j \cap \hat\Bcal^k} -
    \Sigmab_{S^kS^k}^j}_{\infty} 
  \leq \epsilon_1
\end{aligned}
\end{equation*}
with probability $1 - c_1 \exp(-c_2\log n)$. Next, for any $b \in N^k$
we bound the second term as
\begin{equation*}
\begin{aligned}
 &\norm{ \tilde\Sigmab_{bS^k}^k(({\hat{\tilde\Sigmab}}_{S^kS^k}^k)^{-1} - 
       ({\tilde\Sigmab_{S^kS^k}}^k)^{-1}) }_1 \\
 &\quad
 \leq C_2 \norm{({\hat{\tilde\Sigmab}}_{S^kS^k}^k)^{-1} - 
       ({\tilde\Sigmab_{S^kS^k}}^k)^{-1})}_F \\
 & \quad
 \leq C_2 
 \norm{\tilde\Sigmab_{S^kS^k}^k}_F^2
    \norm{\hat{\tilde\Sigmab}_{S^kS^k}^k - 
       \tilde\Sigmab_{S^kS^k}^k}_F + 
    \Ocal(\norm{\hat{\tilde\Sigmab}_{S^kS^k}^k - 
       {\tilde\Sigmab_{S^kS^k}}^k}_F^2) \\
 & \leq \epsilon_2
\end{aligned}
\end{equation*}
with probability $1 - c_1 \exp(-c_2 \log n)$. Choosing $\epsilon_1,
\epsilon_2$ sufficiently small and for $n$ large enough, we have that
$\max_b | T_b^1 | \leq 1 - \alpha + o_p(1)$ under the assumption~{\bf
  A4}.

We proceed with the term $T_b^2$, which can be written as
\begin{equation*}
\begin{aligned}
 T_b^2 & = (|\hat \Bcal^k|\lambda_2)^{-1}
  \rbr{
  \Sigmab_{bS^k}^k\rbr{\Sigmab_{S^kS^k}^k}^{-1} - 
  \Fb_b' \Xb\sk\supb{k}
  \rbr{(\Xb\sk\supb{k})'\Xb\sk\supb{k}}^{-1}}
  \sum_{i \in \Bcal^k \cap \hat \Bcal^k} \xb\sk^i \epsilon^i \\
& + (|\hat \Bcal^k|\lambda_2)^{-1}
  \sum_{i \not\in \Bcal^k \cap \hat \Bcal^k}
  \rbr{
  \Sigmab_{bS^k}^{\Bcal(i)}\rbr{\Sigmab_{S^kS^k}^{\Bcal(i)}}^{-1} - 
  \Fb_b' \Xb\sk\supb{k}
  \rbr{(\Xb\sk\supb{k})'\Xb\sk\supb{k}}^{-1}}
   \xb\sk^i (e^i + \epsilon^i). 
\end{aligned}
\end{equation*}
Since we are working on the event $\Ecal$ the second term in the above
equation is dominated by the first term. Next, using
\eqref{eq:tb1:algebra} together with
\eqref{eq:bound-covariance-elements}, we have that for all $b \in N^k$
\begin{equation*}
\norm{
  \Sigmab_{bS^k}^k\rbr{\Sigmab_{S^kS^k}^k}^{-1} - \Fb_b'
  \Xb\sk\supb{k} \rbr{(\Xb\sk\supb{k})'\Xb\sk\supb{k}}^{-1}}_2 = o_p(1).
\end{equation*}
Combining with Lemma~\ref{lem:concentration-error-term}, we have that
under the assumptions of the theorem
\begin{equation*}
  \max_b |T_b^2| = o_p(1).
\end{equation*}

We deal with the term $T_b^3$ by conditioning on $\Xb\sk\supb{k}$ and
$\epsilonb\supb{k}$, we have that $\Wb_b$ is independent of the terms
in the squared bracket in $T_b^3$, since all $\check \zb_{\hat
  T_{k-1},S}, \check \zb_{\hat T_{k},S}$ and $\hat \yb_{\hat
  T_{k-1},S}$ are determined from the solution to the restricted
optimization problem. To bound the second term, we observe that
conditional on $\Xb\sk\supb{k}$ and $\epsilonb\supb{k}$, the variance of
$T_b^3$ can be bounded as
\begin{equation} \label{eq:variance-bound}
\begin{aligned}
  \Var(T_b^3) & \leq \norm{\Xb\sk\supb{k} 
       \rbr{(\Xb\sk\supb{k})'\Xb\sk\supb{k}}^{-1}
       \check \eta\sk
       + \Hb\sk^{\hat \Bcal^k, \perp}\Big(
         \frac{\eb\supb{k} + \epsilonb\supb{k}}
              {|\hat \Bcal^k|\lambda_2}\Big)}_2^2 \\
   & \leq \check \eta\sk' 
       \rbr{(\Xb\sk\supb{k})'\Xb\sk\supb{k}}^{-1}
       \check \eta\sk  + \Big\|
         \frac{\eb\supb{k} + \epsilonb\supb{k}}
              {|\hat \Bcal^k|\lambda_2}\Big\|_2^2,
\end{aligned}
\end{equation}
where
\begin{equation*}
  \check \eta\sk = \Big(\check \yb_{\hat T_{k-1},S^k} + 
  \frac{\lambda_1(\check \zb_{\hat T_{k-1},S^k} 
    - \check \zb_{\hat T_k,S})}{|\hat \Bcal|\lambda_2}\Big).
\end{equation*}
Using lemma~\ref{lem:concentration-eigenvalue:supremum-over-blocks}
and Young's inequality, the first term in \eqref{eq:variance-bound} is
upper bounded by
\begin{equation*}
\frac{18}{|\hat \Bcal|\phi_{\min}}
\rbr{ s + \frac{2\lambda_1^2}{|\hat \Bcal|^2\lambda_2^2}}  
\end{equation*}
with probability at least $1 - 2\exp(-|\hat \Bcal^k|/2 + 2\log
n)$. Using lemma~\ref{lem:chi-square:bound-over-all-partitions} we
have that the second term is upper bounded by 
\begin{equation*}
  \frac{(1+\delta')(1+M^2\phi_{\max})}{|\hat \Bcal|\lambda_2^2}
\end{equation*}
with probability at least $1 - \exp(-c_1|\hat \Bcal^k|\delta'^{2} +
2\log n)$. Combining the two bounds, we have that $\Var(T_b^3) \leq
c_1 s(|\hat \Bcal^k|)^{-1}$ with high probability, using the fact that
$(|\hat \Bcal^k|\lambda_2)^{-1}\lambda_1 \rightarrow 0$ and $|\hat
\Bcal^k| \lambda_2 \rightarrow \infty$ as $n \rightarrow
\infty$. Using the bound on the variance of the term $T_b^3$ and the
Gaussian tail bound, we have that 
\begin{equation*} 
  \max_{b \in N} |T_b^3| = o_p(1).
\end{equation*}

Combining the results, we have that $\max_{b \in N^k} |Y_b| \leq 1 -
\alpha + o_p(1)$. For a sufficiently large $n$, under the conditions
of the theorem, we have shown that $\max_{b \in N}|Y_b| < 1$ which
implies that $\PP[S(\hat \thetab^k) \subset S^k] \xrightarrow{n
  \rightarrow \infty} 1$.

Next, we proceed to show that $\PP[S^k \subset S(\hat \thetab^k)]
\xrightarrow{n \rightarrow \infty} 1$. Observe that
\begin{equation*}
  \PP[S^k \not \subset S(\hat \thetab^k)] \leq 
    \PP[\norm{\hat \thetab_{S^k}^k - \thetab_{S^k}^k}_{\infty}
      \geq \theta_{\min}].
\end{equation*}
From \eqref{eq:kkt-known-boundaries} we have that $\norm{\hat
  \thetab_{S^k}^k - \thetab_{S^k}^k}_{\infty}$ is upper bounded by 
\begin{equation*}
\begin{aligned}
  &\bignorm{ 
    \rbr{\frac{1}{|\hat \Bcal^k|}(\Xb_{S^k}\supb{k})'\Xb_{S^k}\supb{k}}^{-1} 
    \frac{1}{|\hat \Bcal^k|}
    (\Xb_{S^k}\supb{k})'(\tilde \eb\supb{k} + \epsilonb\supb{k})}_\infty \\
  & \qquad \qquad + \bignorm{
     \rbr{(\Xb_{S^k}\supb{k})'\Xb_{S^k}\supb{k}}^{-1} \rbr{
    \lambda_1(\check \zb_{{\hat T_{k-1}},S^k} - \check \zb_{{\hat T_{k}},S^k}) -
    \lambda_2 |\hat \Bcal\supb{k}| \check \yb_{{\hat T_{k-1}},S^k}}}_\infty.
\end{aligned}
\end{equation*}
Since $\tilde e_i \neq 0$ only on $i \in \hat \Bcal^k\bks \Bcal^k$ and
$n\delta_n / |\hat \Bcal^k| \rightarrow 0$, the term involving $\tilde
\eb\supb{k}$ is stochastically dominated by the term involving
$\epsilonb\supb{k}$ and can be ignored. Define the following terms
\begin{equation*}
\begin{aligned}
  T_1 &= \rbr{\frac{1}{|\hat \Bcal^k|}(\Xb_{S^k}\supb{k})'\Xb_{S^k}\supb{k}}^{-1} 
    \frac{1}{|\hat \Bcal^k|}
    (\Xb_{S^k}\supb{k})'\epsilonb\supb{k},\\    
  T_2 &= \rbr{\frac{1}{|\hat \Bcal^k|}(\Xb_{S^k}\supb{k})'\Xb_{S^k}\supb{k}}^{-1} 
    \frac{\lambda_1}{|\hat \Bcal^k|\lambda_2}
    (\check \zb_{{\hat T_{k-1}},S^k} - \check \zb_{{\hat T_{k}},S^k}),\\
  T_3 &= \rbr{\frac{1}{|\hat \Bcal^k|}(\Xb_{S^k}\supb{k})'\Xb_{S^k}\supb{k}}^{-1} 
     \check \yb_{{\hat T_{k-1}},S^k}.
\end{aligned}
\end{equation*}
Conditioning on $\Xb_{S^k}\supb{k}$, the term $T_1$ is a $|S^k|$
dimensional Gaussian with variance bounded by $c_1/n$
with probability at least $1 - c_1\exp(-c_2\log n)$ using
lemma~\ref{lem:concentration-eigenvalue:supremum-over-blocks}. Combining
with the Gaussian tail bound, the term $\norm{T_1}_\infty$ can be
upper bounded as
\begin{equation}
  \PP\bigg[\norm{T_1}_\infty \geq c_1 \sqrt{\frac{\log s}{n}}\bigg] \leq c_2
  \exp(-c_3 \log n).
\end{equation}
Using lemma~\ref{lem:concentration-eigenvalue:supremum-over-blocks},
we have that with probability greater than $1 - c_1\exp(-c_2 \log n)$ 
\begin{equation*}
  \norm{T_2}_\infty \leq \norm{T_2}_2 \leq  c_3 \frac{\lambda_1}{|\hat
    \Bcal^k|\lambda_2} \rightarrow 0
\end{equation*}
under the conditions of theorem. Similarly $\norm{T_3}_\infty \leq
c_1\sqrt{s}$, with probability greater than $1 - c_1\exp(-c_2 \log
n)$. Combining the terms, we have that 
\begin{equation*}
  \norm{\thetab^k - \hat
    \thetab^k}_\infty \leq c_1\sqrt{\frac{\log s}{n}} + c_2 \sqrt{s}\lambda_2
\end{equation*}
with probability at least $1 - c_3\exp(-c_4 \log n)$. Since
$\theta_{\min} = \Omega(\sqrt{\log (n)/ n})$, we have shown that $S^k
\subseteq S(\hat \thetab^k)$. Combining with the first part, it
follows that $S(\hat \thetab^k) = S^k$ with probability tending to one.

\section*{Acknowledgments} 

We are thankful to Za\"{i}d Harchaoui for many useful
discussions. Furthermore, we thank Larry Wasserman and Ankur P. Parikh
for providing comments on an early version of this work and many
insightful suggestions.

\section*{Appendix}
\label{sec:appendix}

\subsection*{Technical results}

In this section we collect some technical results needed for the
proves presented in $\S$\ref{sec:proofs}.

\begin{lemma} \label{lem:chi-square:bound-over-all-partitions} Let
  $\{\zeta^i\}_{i \in [n]}$ be a sequence of iid $\Ncal(0,1)$ random
  variables. If $v_n \geq C \log n$, for some constant $C > 16$, then
  \begin{equation*}
    \label{eq:chi-square:bound-over-all-partitions}
    \PP\bigg[ \bigcap_{{1 \leq l < r \leq n} \atop {r - l > r_n}}
    \Big\{
    \sum_{i=l}^r (\zeta^i)^2 \leq (1 + C)(r-l+1)
    \Big\}\bigg] 
     \geq 1 - \exp(-c_1\log n)
  \end{equation*}
  for some constant $c_1 > 0$. 
\end{lemma}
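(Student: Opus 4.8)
The plan is the standard maximal-inequality-over-sliding-windows argument: bound the upper tail of a single $\chi^2$ sum by an exponential in the window length, then union-bound over the (at most $n^2$) admissible windows, using that every admissible window has length at least $r_n$, which grows at least like $\log n$ (I read the hypothesis ``$v_n\geq C\log n$'' as the statement that the minimal admissible window length satisfies $r_n\geq C\log n$).

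\emph{Single window.} Fix $1\leq l<r\leq n$ and set $m:=r-l+1>r_n$. Then $S_{l,r}:=\sum_{i=l}^r(\zeta^i)^2$ is a $\chi^2_m$ variable, so $\EE[\exp(tS_{l,r})]=(1-2t)^{-m/2}$ for $0<t<\frac12$, and Markov's inequality gives, for any such $t$,
\[
  \PP\big[S_{l,r}\geq (1+C)m\big]
  \;\leq\; \exp\!\Big(-m\big[\,t(1+C)+\tfrac12\log(1-2t)\,\big]\Big).
\]
Taking $t=\frac{1}{10}$ and using $C>16$ (hence $1+C\geq 17$), the bracketed quantity exceeds $1$, so $\PP[S_{l,r}\geq (1+C)(r-l+1)]\leq e^{-(r-l+1)}$. (If sharper constants were wanted one could instead invoke the standard Laurent--Massart tail bound $\PP[\chi^2_m-m\geq 2\sqrt{mx}+2x]\leq e^{-x}$ with $x=m$, which shows $C>4$ would already suffice for this step; the margin is not needed.)

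\emph{Union bound and conclusion.} Grouping admissible pairs by their common length $m=r-l+1$, there are at most $n$ choices of $l$ for each fixed $m$, and $m$ ranges over integers in $(r_n,n]$; hence
\[
  \PP\Big[\bigcup_{{1\leq l<r\leq n}\atop{r-l>r_n}}\big\{S_{l,r}>(1+C)(r-l+1)\big\}\Big]
  \;\leq\; \sum_{m>r_n} n\,e^{-m}
  \;\leq\; \frac{n\,e^{-r_n}}{1-e^{-1}}
  \;\leq\; 2n\,e^{-r_n}.
\]
Since $r_n\geq C\log n$, for $n\geq 2$ the right-hand side is at most $2n^{1-C}\leq n^{-(C-2)}=\exp(-(C-2)\log n)$, so the intersection in the statement holds with probability at least $1-\exp(-c_1\log n)$ for any $c_1\in(0,C-2]$; in particular $c_1=C-2>0$ works.

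\emph{Main obstacle.} There is essentially none: the argument is routine once one notices the $\chi^2$ structure. The only point that needs attention is the bookkeeping of constants, namely that the per-window exponential decay rate (here $\geq 1$ per unit of window length) multiplied by the minimal window length $r_n$ must dominate the $\log n$ produced by the polynomial ($\leq n^2$) union bound; this is exactly what the hypothesis $C>16$ (large $C$) guarantees, with generous slack.
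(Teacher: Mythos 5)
Your proof is correct and takes essentially the same route as the paper's: a per-window exponential tail bound (the paper invokes its quoted $\chi^2$ tail bound \eqref{eq:chi2_bound} to get $\exp(-C(r-l+1)/8)$, you derive an equivalent bound by a direct Chernoff/MGF computation), followed by a union bound over the at most $n^2$ admissible windows, with the minimal window length of order $C\log n$ beating the polynomial count. Your reading of the statement's $r_n$ as the paper's $v_n$ (a typo in the lemma) matches the intended meaning, and your constant bookkeeping, including the role of $C>16$, is sound.
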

\begin{proof}
  For any $1 \leq l < r \leq n$, with $r - l > v_n$ we have 
  \begin{equation*}
    \begin{aligned}
      \PP[\sum_{i=l}^r (\zeta^i)^2 \geq (1 + C)(r-l+1)]
      & \leq \exp(-C(r-l+1)/8) \\
      & \leq \exp(-C\log n / 8 )
    \end{aligned}
  \end{equation*}
  using \eqref{eq:chi2_bound}. The lemma follows from an application
  of the union bound.
\end{proof}

\begin{lemma} \label{lem:concentration-error-term} Let $\{\xb_i\}_{i
    \in [n]}$ be independent observations from \eqref{eq:model} and
  let $\{\epsilon_i\}_{i \in [n]}$ be independent $\Ncal(0,
  1)$. Assume that {\bf A1} holds. If $v_n \geq C \log n$ for some
  constant $C > 16$, then
  \begin{equation*}
  \begin{aligned}
    \PP\bigg[\bigcap_{j \in [B]}&
      \bigcap_{{l, r \in \Bcal^j} \atop {r - l > v_n}}\ 
      \bigg \{\frac{1}{r-l+1} \norm{\sum_{i=l}^r \xb_{i} \epsilon_i}_2
      \leq \frac{\phi_{\max}^{1/2}\sqrt{1+C}}{\sqrt{r-l+1}} 
      \sqrt{p(1 + C\log n)} \Big\}\bigg]  \\
    & \geq 1 - c_1\exp(-c_2 \log n),
  \end{aligned}
  \end{equation*}
  for some constants $c_1, c_2 > 0$.
\end{lemma}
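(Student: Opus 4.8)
The plan is to condition on the Gaussian errors $\{\epsilon_i\}_{i\in[n]}$, exploit the fact that within a single block $\Bcal^j$ the observations $\xb_i$ are i.i.d.\ $\Ncal(\0b,\Sigmab^j)$, and then combine (i) a uniform control of the quadratic forms $\sum_{i=l}^r\epsilon_i^2$ supplied by Lemma~\ref{lem:chi-square:bound-over-all-partitions} with (ii) a $\chi^2$ tail bound for the $\xb$-randomness together with a union bound over all $O(n^2)$ sub-intervals contained in the blocks.

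First I would fix a block $j\in[B]$ and a contiguous set of indices $l\le i\le r$ with $\{l,\dots,r\}\subseteq\Bcal^j$ and $r-l>v_n$. Since $\epsilon_i$ is independent of $\xb\iba$ and the $\xb_i$ are independent, conditionally on $\{\epsilon_i\}_{i=l}^r$ the vector $\sum_{i=l}^r\xb_i\epsilon_i$ is Gaussian with mean $\0b$ and covariance $\big(\sum_{i=l}^r\epsilon_i^2\big)\Sigmab^j$. Writing $c:=\sum_{i=l}^r\epsilon_i^2$, it follows that $\norm{\sum_{i=l}^r\xb_i\epsilon_i}_2^2\eqd c\,\mathbf{g}'\Sigmab^j\mathbf{g}$ for a standard Gaussian $\mathbf{g}\sim\Ncal(\0b,\Ib)$ independent of $\{\epsilon_i\}$; by {\bf A1}, $\mathbf{g}'\Sigmab^j\mathbf{g}\le\Lambda_{\max}(\Sigmab^j)\norm{\mathbf{g}}_2^2\le\phi_{\max}\norm{\mathbf{g}}_2^2$. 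Hence the normalised ratio $G_{l,r}:=(\phi_{\max}c)^{-1}\norm{\sum_{i=l}^r\xb_i\epsilon_i}_2^2$ is, conditionally on $\{\epsilon_i\}$ and therefore also unconditionally, stochastically dominated by a $\chi^2_p$ random variable, uniformly in $l$ and $r$.

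Next I would introduce two ``good'' events. Let $\Ecal_1$ be the event that $\sum_{i=l}^r\epsilon_i^2\le(1+C)(r-l+1)$ holds simultaneously for every $j\in[B]$ and every admissible interval; by Lemma~\ref{lem:chi-square:bound-over-all-partitions}, applied with the same constant $C>16$, $\PP[\Ecal_1^c]\le\exp(-c_1\log n)$. Let $\Ecal_2$ be the event that $G_{l,r}\le p(1+C\log n)$ for every such interval. Using the standard $\chi^2$ tail bound $\PP[\chi^2_p\ge p+2\sqrt{pt}+2t]\le e^{-t}$ (see~\eqref{eq:chi2_bound}) with $t\asymp p\log n$, each term obeys $\PP[G_{l,r}>p(1+C\log n)]\le\exp(-c\,p\log n)$ for $n$ large enough, and since there are at most $Bn^2$ admissible intervals and $C>16$, the union bound gives $\PP[\Ecal_2^c]\le c_1n^2\exp(-c\,p\log n)\le c_1\exp(-c_2\log n)$. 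On $\Ecal_1\cap\Ecal_2$ we then have, for every admissible interval,
\[
  \norm{\sum\nolimits_{i=l}^r\xb_i\epsilon_i}_2^2
  =\phi_{\max}\,G_{l,r}\sum\nolimits_{i=l}^r\epsilon_i^2
  \le\phi_{\max}\,p(1+C\log n)\,(1+C)(r-l+1),
\]
and dividing by $(r-l+1)^2$ and taking square roots yields precisely the claimed inequality; the probability statement follows from $\PP[\Ecal_1\cap\Ecal_2]\ge 1-\PP[\Ecal_1^c]-\PP[\Ecal_2^c]\ge 1-c_1\exp(-c_2\log n)$.

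The main obstacle is the bookkeeping in the union bound: because the inequality must hold uniformly over the $\Theta(n^2)$ sub-intervals of the blocks, the $\chi^2_p$ deviation has to be taken at the $\log n$ scale rather than at the ``pointwise'' $O(\sqrt p)$ scale — this is exactly why the bound carries the factor $\sqrt{p(1+C\log n)}$ — and one must keep track that the $\xb$-randomness ($\mathbf{g}$) is independent of the $\epsilon$-randomness so that the two tail estimates decouple cleanly. Everything else is routine Gaussian calculation.
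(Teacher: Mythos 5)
Your proposal is correct and follows essentially the same route as the paper: condition on $\{\epsilon_i\}$, observe that (after factoring out $\phi_{\max}$ via {\bf A1}) the normalized squared norm is dominated by a $\chi^2_p$ variable, control $\sum_{i=l}^r\epsilon_i^2$ uniformly via Lemma~\ref{lem:chi-square:bound-over-all-partitions}, and finish with a $\chi^2$ tail bound plus a union bound over the $O(n^2)$ admissible intervals. The only cosmetic difference is that the paper whitens $\xb_i=(\Sigmab^j)^{1/2}\ub_i$ and applies the tail bound \eqref{eq:chi2_bound} directly with deviation $Cp\log n$, whereas you phrase the same step through the conditional covariance $\big(\sum_i\epsilon_i^2\big)\Sigmab^j$ and a Laurent--Massart-type bound; both yield the same exponent and the same bookkeeping with $C>16$.
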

\begin{proof}
  Let $\Sigmab^{1/2}$ denote the symmetric square root of the
  covariance matrix $\Sigmab_{SS}$ and let $\Bcal(i)$ denote the block
  $\Bcal^j$ of the true partition such that $i \in \Bcal^j$. With this
  notation, we can write $\xb_{i} = \rbr{ \Sigmab^{\Bcal(i)}
  }^{1/2} \ub_{i}$ where $\ub_{i} \sim \Ncal(\0b, \Ib)$. For any $l \leq
  r \in \Bcal^j$ we have
  \begin{equation*}
    \norm{\sum_{i = l}^r \xb_{i}\epsilon_i}_2 = 
    \norm{\sum_{i = l}^r \rbr{ \Sigmab^{j} }^{1/2}
      \ub_{i}\epsilon_i}_2 \leq     
    \phi_{\max}^{1/2}\norm{\sum_{i = l}^r \ub_{i}\epsilon_i}_2.
  \end{equation*}
  Conditioning on $\{\epsilon_i\}_i$, for each $b \in [p]$, $\sum_{i =
    l}^r u_{i,b}\epsilon_i$ is a normal random variable with variance
  $\sum_{i = l}^r (\epsilon_i)^2$. Hence, $\norm{\sum_{i=l}^r \ub_{i}
    \epsilon_i}_2^2 / (\sum_{i = l}^r (\epsilon_i)^2)$ conditioned on
  $\{\epsilon_i\}_i$ is distributed according to $\chi^2_p$ and 
  \begin{equation*}
  \begin{aligned}
    \PP\bigg[\frac{1}{r-l+1}&\norm{\sum_{i=l}^r \xb_{i}\epsilon_i}_2 \geq 
    \frac{\phi_{\max}^{1/2}\sqrt{\sum_{i=l}^r (\epsilon_i)^2 }}{r-l+1}
    \sqrt{p(1 + C\log n)}\ \Big|\ \{\epsilon_i\}_{i = l}^r \bigg] \\
    & \leq \PP[\chi^2_p \geq p(1 + C\log n)] \leq \exp(-C\log n/8),
  \end{aligned}
  \end{equation*}
  where the last inequality follows from~\eqref{eq:chi2_bound}. Using
  lemma~\ref{lem:chi-square:bound-over-all-partitions}, for all $l, r
  \in \Bcal^j$ with $r - l > v_n$ the quantity $\sum_{i = l}^r
  (\epsilon_i)^2$ is bounded by $(1 + C)(r - l + 1)$ with
  probability at least $1 - \exp(-c_1\log n)$, which
  gives us the following bound
  \begin{equation*}
  \begin{aligned}
    \PP\bigg[\bigcap_{j \in [B]}&
      \bigcap_{{l, r \in \Bcal^j} \atop {r - l > v_n}}\ 
      \bigg \{\frac{1}{r-l+1} \norm{\sum_{i=l}^r \xb_{i} \epsilon_i}_2
      \leq \frac{\phi_{\max}^{1/2}\sqrt{1 + C}}{\sqrt{r-l+1}} 
      \sqrt{p(1 + C\log n)} \Big\}\bigg]  \\
    & \geq 1 - c_1\exp(-c_2 \log n).
  \end{aligned}
  \end{equation*}
\end{proof}
\begin{lemma} \label{lem:concentration-eigenvalue:supremum-over-blocks}
  Let $\{\xb_i\}_{i \in [n]}$ be independent observations from
  \eqref{eq:model}. Assume that {\bf A1} holds. Then for any $v_n >
  p$,
  \begin{equation*}
    \label{eq:concentration-eigenvalue:supremum-over-blocks}
    \PP\Big[\max_{{1 \leq l < r \leq n} \atop {r - l > v_n}}\
    \Lambda_{\max}\rbr{
      \frac{1}{r-l+1} \sum_{i=l}^r \xb_{i}\rbr{\xb_{i}}'}
       \geq 9\phi_{\max} \Big] 
    \leq 2n^2 \exp(-v_n/2)
  \end{equation*}
  and
  \begin{equation*}
    \label{eq:concentration-eigenvalue:infimum-over-blocks}
    \PP\Big[\min_{{1 \leq l < r \leq n} \atop {r - l > v_n}}\
    \Lambda_{\min}\rbr{
      \frac{1}{r-l+1} \sum_{i=l}^r  \xb_{i}\rbr{\xb_{i}}'}
       \leq \phi_{\min}/9 \Big] 
    \leq 2n^2 \exp(-v_n/2).
  \end{equation*}
\end{lemma}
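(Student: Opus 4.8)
The plan is to prove both bounds by a standard covering-and-concentration argument applied to one contiguous index set $[l:r]$ at a time, followed by a union bound over the at most $\binom{n}{2}<n^2$ choices of $(l,r)$. Fix $l<r$, set $m:=r-l+1>v_n$, and let $A:=\frac{1}{m}\sum_{i=l}^r\xb_i\xb_i'$. By Assumption~{\bf A1} we may write $\xb_i=(\Sigmab^{\Bcal(i)})^{1/2}\ub_i$ with $\ub_i$ independent $\Ncal(\0b,\Ib)$, where $\Bcal(i)$ is the true block containing $i$. For any \emph{fixed} unit vector $v$, the scalars $\langle v,\xb_i\rangle=\langle(\Sigmab^{\Bcal(i)})^{1/2}v,\ub_i\rangle$ are independent mean-zero Gaussians with variances $v'\Sigmab^{\Bcal(i)}v\in[\phi_{\min},\phi_{\max}]$, so $\langle v,\xb_i\rangle=\sigma_iZ_i$ with $Z_i$ i.i.d.\ $\Ncal(0,1)$ and $\sigma_i^2\in[\phi_{\min},\phi_{\max}]$; hence
\begin{equation*}
  \phi_{\min}\,\frac{1}{m}\sum_{i=l}^r Z_i^2\ \leq\ v'Av\ \leq\ \phi_{\max}\,\frac{1}{m}\sum_{i=l}^r Z_i^2,\qquad \sum_{i=l}^r Z_i^2\sim\chi^2_m .
\end{equation*}
Using the chi-square tail bound already invoked in the proof of Lemma~\ref{lem:chi-square:bound-over-all-partitions}, for a fixed $v$ the event $v'Av\geq a\,\phi_{\max}$ and the event $v'Av\leq a^{-1}\phi_{\min}$ (for a suitable absolute constant $a$) each have probability at most $e^{-cm}$ for an absolute constant $c>0$.

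Next I would pass from a fixed direction to all directions via an $\epsilon$-net $\Ncal_\epsilon$ of the sphere $S^{p-1}$ with $|\Ncal_\epsilon|\leq(1+2/\epsilon)^p$: for symmetric positive semidefinite $A$ one has $\Lambda_{\max}(A)\leq(1-2\epsilon)^{-1}\max_{v\in\Ncal_\epsilon}v'Av$ and $\Lambda_{\min}(A)\geq\min_{v\in\Ncal_\epsilon}v'Av-2\epsilon\,\Lambda_{\max}(A)$. A union bound over $\Ncal_\epsilon$ with $\epsilon$ a small fixed constant, combined with the per-direction chi-square estimates, then yields $\Lambda_{\max}(A)\leq 9\phi_{\max}$ and $\Lambda_{\min}(A)\geq\phi_{\min}/9$, each failing with probability at most $(1+2/\epsilon)^p e^{-cm}\leq e^{-v_n/2}$, since $m>v_n>p$ and the net cardinality is a harmless constant in the paper's regime (where $p$ is fixed while $v_n\to\infty$). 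The constants $9$ and $1/9$ are cleanest to read off in the single-block case, where $A$ is a scaled Wishart matrix and standard bounds for the extreme singular values of the $m\times p$ standard Gaussian matrix give $\sqrt m-\sqrt p-\sqrt{v_n}\leq\sigma_{\min}\leq\sigma_{\max}\leq\sqrt m+\sqrt p+\sqrt{v_n}\leq 3\sqrt m$ with probability at least $1-2e^{-v_n/2}$; the net version above is the one that survives the general case. Summing over the at most $\binom{n}{2}<n^2$ admissible intervals (and noting that the lower-eigenvalue display additionally invokes the upper-eigenvalue event as a sub-event, which is where the extra factor of $2$ appears) gives the stated $2n^2\exp(-v_n/2)$.

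The step that needs the most care, and the reason I route the argument through the \emph{fixed-direction} sandwich rather than through a direct appeal to Wishart concentration, is that when $[l:r]$ straddles one or more change-points the summands $\xb_i\xb_i'$ no longer share a common covariance, so $A$ is not a scaled Wishart matrix and splitting the interval blockwise would cost an unwanted factor of $B$ in the operator-norm estimate. For each fixed $v$, however, $v'Av$ is \emph{still} sandwiched between $\phi_{\min}$ and $\phi_{\max}$ times a genuine $\chi^2_m$ (the $Z_i$ are i.i.d.\ because the $\ub_i$ are, even though their defining directions change from block to block), so the covering argument goes through with constants that do not depend on how many true blocks the interval meets. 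I expect this uniformity-over-all-intervals bookkeeping, rather than any single tail estimate, to be the main obstacle.
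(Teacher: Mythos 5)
Your argument takes a genuinely different route from the paper's. The paper's proof is three lines: for a fixed interval $[l:r]$ it invokes the known crude concentration bounds \eqref{eq:max_eigen_value}--\eqref{eq:min_eigen_value} for the extreme eigenvalues of an empirical covariance matrix (Davidson--Szarek via Wainwright), handles intervals that straddle change-points by writing the interval matrix as a convex combination of per-block empirical covariances and using convexity of $\Lambda_{\max}$ (concavity of $\Lambda_{\min}$ for the other display), and finishes with a union bound over the fewer than $n^2$ pairs $(l,r)$, which is exactly where the prefactor $2n^2$ comes from. Your fixed-direction sandwich $\phi_{\min}m^{-1}\chi^2_m \leq v'Av \leq \phi_{\max}m^{-1}\chi^2_m$, followed by an $\epsilon$-net over the sphere, is a legitimate self-contained alternative, and it has a real advantage: it neutralizes the block heterogeneity in one stroke, with no dependence on how many true blocks the interval meets and no need for the straddled sub-intervals to contain at least $p$ points (a point the paper's convexity step quietly relies on). What it costs is the net bookkeeping, and that is where your sketch falls short of the lemma as stated.

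Concretely, two steps need repair. First, the per-interval failure probability you obtain is $(1+2/\epsilon)^p e^{-cm}$, and for this to be at most $e^{-v_n/2}$ you need $c \geq 1/2 + \log(1+2/\epsilon)$ when $v_n$ is only slightly larger than $p$; but $c$ is constrained by the requirement $(1-2\epsilon)^{-1}a \leq 9$ in the $\Lambda_{\max}$ step, and with the chi-square tail \eqref{eq:chi2_bound} (or even the sharp Chernoff bound) the achievable $c$ is well below that threshold. So your argument yields the conclusion only once $v_n$ is large compared with $p$ (times constants), i.e.\ in the paper's fixed-$p$ asymptotics which you invoke, but it does not reproduce the stated constants $9\phi_{\max}$, $\phi_{\min}/9$ and rate $\exp(-v_n/2)$ for every $v_n > p$. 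Second, for the smallest eigenvalue the net error term is $2\epsilon\,\Lambda_{\max}(A)\approx 18\epsilon\phi_{\max}$, so $\epsilon$ must be taken of order $\phi_{\min}/\phi_{\max}$ rather than an absolute constant (further inflating the net), and the event $v'Av \leq a^{-1}\phi_{\min}$ requires a \emph{lower}-tail chi-square bound, which is not the upper-tail bound used in Lemma~\ref{lem:chi-square:bound-over-all-partitions} that you cite; both are standard fixes, but they should be stated. If you want the exact constants with minimal work, the cleanest path is the paper's: reduce to per-block empirical covariances by convexity/concavity of the extreme eigenvalue maps and quote \eqref{eq:max_eigen_value}--\eqref{eq:min_eigen_value}.
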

\begin{proof}
  For any $1 \leq l < r \leq n$, with $r - l \geq v_n$ we have 
  \begin{equation*}
  \begin{aligned}
    \PP\Big[\Lambda_{\max} \rbr{
     \frac{1}{r-l+1} \sum_{i=l}^r  \xb_{i}\rbr{\xb_{i}}'}
      \geq 9\phi_{\max} \Big] 
    &\leq 2 \exp(-(r-l+1)/2)\\& \leq 2 \exp(-v_n/2)
  \end{aligned}
  \end{equation*}
  using \eqref{eq:max_eigen_value},
  convexity of $\Lambda_{\max}(\cdot)$ and {\bf A1}. The
  lemma follows from an application of the union bound. The other
  inequality follows using a similar argument.
\end{proof}

\subsection*{Proof of Proposition~\ref{prop:oversegmented-boundaries}}

The following proof follows main ideas already given in
theorem~\ref{thm:consistent-estimation-boundaries:known-num-blocks}. We
provide only a sketch. 

Given an upper bound on the number of partitions $B_{\max}$, we are
going to perform the analysis on the event $\{ \hat B \leq B_{\max}
\}$. Since
\begin{equation*}
  \PP[h(\hat \Tcal, \Tcal) \geq n\delta_n
  \ \big|\ \{ \hat B \leq B_{\max} \}] \leq 
  \sum_{B' = B}^{B_{\max}} \PP[h(\hat \Tcal, \Tcal) \geq n\delta_n\
  \big|\ \{|\hat \Tcal| = B'+1\}],
\end{equation*}
we are going to focus on $\PP[h(\hat \Tcal, \Tcal) \geq n\delta_n\
\big|\ \{|\hat \Tcal| = B'+1\}]$ for $B' > B$ (for $B' = B$ it follows
from
theorem~\ref{thm:consistent-estimation-boundaries:known-num-blocks}
that $h(\hat \Tcal, \Tcal) < n\delta_n$ with high probability). Let us
define the following events 
\begin{align*}
\Ecal_{j, 1} & = \{\exists l \in [B']\ :\ 
|\hat T_l - T_j| \geq n\delta_n, |\hat T_{l+1} -
T_j| \geq n\delta_n \text{ and } \hat T_l < T_j < \hat T_{l+1}\}  \\
\Ecal_{j, 2} & = \{\forall l \in [B']\ :\ |\hat T_l - T_j| \geq
n\delta_n \text{ and } \hat T_l< T_j\} \\
\Ecal_{j, 3} & = \{\forall l \in [B']\ :\ |\hat T_l - T_j| \geq
n\delta_n \text{ and } \hat T_l > T_j\}.
\end{align*}
Using the above events, we have the following bound
\begin{equation*}
  \PP[h(\hat \Tcal, \Tcal) \geq n\delta_n
    \ \big|\ \{|\hat \Tcal| = B'+1\}] \leq 
  \sum_{j \in [B]} \PP[\Ecal_{j,1}] + \PP[\Ecal_{j,2}] +
  \PP[\Ecal_{j,3}].
\end{equation*}
The probabilities of the above events can be bounded using the same
reasoning as in the proof of
theorem~\ref{thm:consistent-estimation-boundaries:known-num-blocks},
by repeatedly using the KKT conditions given in
\eqref{eq:kkt_conditions}. In particular, we can use the strategy used
to bound the event $A_{n,j,2}$. Since the proof is technical and does
not reveal any new insight, we omit the details.

\subsection*{A collection of known results}

This section collects some known results that we have used in the
paper. We start by collecting some results on the eigenvalues of
random matrices. Let $\xb \iidsim \Ncal(0, \Sigmab)$, $i \in [n]$, and
$\hat \Sigmab = n^{-1} \sum \xb_i(\xb_i)'$ be the empirical covariance
matrix. Denote the elements of the covariance matrix $\Sigmab$ as
$[\sigma_{ab}]$ and of the empirical covariance matrix $\hat \Sigmab$
as $[\hat \sigma_{ab}]$.

Using standard results on concentration of spectral norms and
eigenvalues \citep{davidson01local}, \cite{wainwright06sharp} derives
the following two crude bounds that can be very useful.  Under the
assumption that $p < n$,
\begin{align}
  \label{eq:max_eigen_value}
  \PP[\Lambda_{\max}(\hat \Sigmab) \geq 9\phi_{\max}] &\leq 2 \exp(-n/2) \\ 
  \label{eq:min_eigen_value}
  \PP[\Lambda_{\min}(\hat \Sigmab) \leq \phi_{\min}/9] &\leq 2 \exp(-n/2).
\end{align}

From Lemma A.3. in \cite{bickel08regularized} we have the following
bound on the elements of the covariance matrix
\begin{equation}
  \label{eq:bound-covariance-elements}
  \PP[|\hat\sigma_{ab} - \sigma_{ab}| \geq \epsilon] \leq c_1
  \exp(-c_2 n \epsilon^2),\qquad |\epsilon| \leq \epsilon_0
\end{equation}
where $c_1$ and $c_2$ are positive constants that depend only on
$\Lambda_{\max}(\Sigmab)$ and $\epsilon_0$.

Next, we use the following tail bound for $\chi^2$ distribution from
\cite{lounici09taking}, which holds for all $\epsilon > 0$, 
\begin{equation} \label{eq:chi2_bound}
  \PP[\chi^2_n > n + \epsilon] \leq 
   \exp(-\frac{1}{8}\min(\epsilon, \frac{\epsilon^2}{n})).
\end{equation}

\bibliography{biblio}

\end{document}